\newtheorem{thm}{Theorem}
\newtheorem{lem}[thm]{Lemma}
\newtheorem{cor}[thm]{Corollary}
\newtheorem{defn}[thm]{Definition}
\newcommand{\blockcomment}[1]{}
\newcommand{\nn}{\nonumber}
\renewcommand{\l}{\left}
\renewcommand{\r}{\right}
\newcommand{\T}{\top}
\newcommand{\iid}{\stackrel{\tiny{\mathrm{i.i.d.}}}{\sim}}
\newcommand{\deq}{\stackrel{\tiny{\mathrm{def}}}{=}}
\newcommand{\Bern}{\mathrm{Bernoulli}}
\newcommand{\range}{\mathrm{Range}}
\newcommand{\var}{\mathrm{Var}}
\newcommand{\A}{\mathbb{A}}
\newcommand{\E}{\mathbb{E}}
\newcommand{\R}{\mathbb{R}}
\renewcommand{\P}{\mathbb{P}}
\newcommand{\I}{\mathds{1}}
\newcommand{\D}{\mathbb{D}}
\newcommand{\cA}{\mathcal{A}}
\newcommand{\cD}{\mathcal{D}}
\newcommand{\cI}{\mathcal{I}}
\newcommand{\cN}{\mathcal{N}}
\newcommand{\cP}{\mathcal{P}}
\newcommand{\bx}{\mathbf{x}}
\renewcommand{\a}{\alpha}
\renewcommand{\b}{\beta}
\renewcommand{\d}{\delta}
\renewcommand{\th}{\theta}
\newcommand{\e}{\varepsilon}
\newcommand{\g}{\gamma}
\newcommand{\n}{\eta}
\newcommand{\s}{\sigma}
\newcommand{\train}{\cD_{\mathrm{train}}}
\newcommand{\Din}{D^{\mathrm{in}}}
\newcommand{\Dout}{D^{\mathrm{out}}}
\newcommand{\bDin}{\D^{\mathrm{in}}}
\newcommand{\bDout}{\D^{\mathrm{out}}}
\newcommand{\lap}{\mathrm{Laplace}}
\newcommand{\hth}{\hat{\th}}
\newcommand{\const}{7.5}
\title{Provable Membership Inference Privacy}
\author{Zachary Izzo \thanks{Department of Mathematics, Stanford University, \texttt{zizzo@stanford.edu}. Research was conducted while the author was an intern at Google Cloud AI Research.} \and Jinsung Yoon \thanks{Google Cloud AI Research, \texttt{jinsungyoon@google.com}} \and Sercan O. Arik \thanks{Google Cloud AI Research, \texttt{soarik@google.com}} \and James Zou \thanks{Department of Biomedical Data Science, Stanford University, \texttt{jamesz@stanford.edu}}}
\date{}
\begin{document}

\maketitle

\begin{abstract}
In applications involving sensitive data, such as finance and healthcare, the necessity for preserving data privacy can be a significant barrier to machine learning model development. Differential privacy (DP) has emerged as one canonical standard for provable privacy. However, DP's strong theoretical guarantees often come at the cost of a large drop in its utility for machine learning; and DP guarantees themselves can be difficult to interpret. 
In this work, we propose a novel privacy notion, membership inference privacy (MIP), to address these challenges. We give a precise characterization of the relationship between MIP and DP, and show that MIP can be achieved using less amount of randomness compared to the amount required for guaranteeing DP, leading to smaller drop in utility. 
MIP guarantees are also easily interpretable in terms of the success rate of membership inference attacks. 
Our theoretical results also give rise to a simple algorithm for guaranteeing MIP which can be used as a wrapper around any algorithm with a continuous output, including parametric model training.
\end{abstract}

\section{Introduction}
As the popularity and efficacy of machine learning (ML) have increased, the number of domains in which ML is applied has also expanded greatly. Some of these domains, such as finance or healthcare, have ML on sensitive data which cannot be publicly shared due to regulatory or ethical concerns \citep{assefa2020finance, hipaa}. In these instances, maintaining data privacy is of paramount importance and must be considered at every stage of the ML process, from model development to deployment. During development, even sharing data in-house while retaining the appropriate level of privacy can be a barrier to model development \citep{assefa2020finance}. After deployment, the trained model itself can leak information about the training data if appropriate precautions are not taken \citep{shokri2017membership, carlini2021membership}.

Differential privacy (DP) \citep{dwork2014dp} has emerged as the gold standard for provable privacy in the academic literature. Training methods for DP use randomized algorithms applied on databases of points, and DP stipulates that the algorithm's random output cannot change much depending on the presence or absence of one individual point in the database. These guarantees in turn give information theoretic protection against the maximum amount of information that an adversary can obtain about any particular sample in the dataset, regardless of that adversary's prior knowledge or computational power, making DP an attractive method for guaranteeing privacy. However, DP's strong theoretical guarantees often come at the cost of a large drop in utility for many algorithms \citep{jayaraman2019evaluating}.
In addition, DP guarantees themselves are difficult to interpret by non-experts. There is a precise definition for what it means for an algorithm to satisfy DP with $\e = 10$, but it is not a priori clear what this definition guarantees in terms of practical questions that a user could have, the most basic of which might be to ask whether or not an attacker can determine whether or not that user's information was included in the algorithm's input. These constitute challenges for adoption of DP in practice.

In this paper, we propose a novel privacy notion, membership inference privacy (MIP), to address these challenges.
Membership inference measures privacy via a game played between the algorithm designer and an adversary or attacker. The adversary is presented with the algorithm's output and a ``target'' sample, which may or may not have been included in the algorithm's input set. The adversary's goal is to determine whether or not the target sample was included in the algorithm's input. If the adversary can succeed with probability much higher than random guessing, then the algorithm must be leaking information about its input. This measure of privacy is one of the simplest for the attacker; thus, provably protecting against it is a strong privacy guarantee. Furthermore, MIP is easily interpretable, as it is measured with respect to a simple quantity--namely, the maximum success rate of an attacker.
In summary, \textit{our contributions} are as follows:
\begin{itemize}
    \item We propose a novel privacy notion, which we dub membership inference privacy (MIP).
    \item We characterize the relationship between MIP and differential privacy (DP). In particular, we show that DP is sufficient to certify MIP and quantify the correspondence.
    \item In addition, we demonstrate that in some cases, MIP can be certified using less randomness than that required for certifying DP. (In other words, while DP is sufficient for certifying MIP, it is not necessary.) This in turn generally means that MIP algorithms can have greater utility than those which implement DP.
    \item We introduce a ``wrapper'' method for turning any base algorithm with continuous output into an algorithm which satisfies MIP.
\end{itemize}

\section{Related Work}

\paragraph{Privacy Attacks in ML}
The study of privacy attacks has recently gained popularity in the machine learning community as the importance of data privacy has become more apparent.
In a \emph{membership inference} attack \citep{shokri2017membership}, an attacker is presented with a particular sample and the output of the algorithm to be attacked. The attacker's goal is to determine whether or not the presented sample was included in the training data or not. If the attacker can determine the membership of the sample with a probability significantly greater than random guessing, this indicates that the algorithm is leaking information about its training data. Obscuring whether or not a given individual belongs to the private dataset is the core promise of private data sharing, and the main reason that we focus on membership inference as the privacy measure. Membership inference attacks against predictive models have been studied extensively \citep{shokri2017membership, baluta2022membership, hu2022multimodalmia, liu2022membership, he2022membershipdoc, carlini2021membership}, and recent work has also developed membership inference attacks against synthetic data \citep{stadler2022groundhog, chen2020ganleaks}.

In a reconstruction attack, the attacker is not presented with a real sample to classify as belonging to the training set or not, but rather has to \emph{create} samples belonging to the training set based only on the algorithm's output. Reconstruction attacks have been successfully conducted against large language models \citep{carlini2021extracting}. At present, these attacks require the attacker to have a great deal of auxiliary information to succeed. For our purposes, we are interested in privacy attacks to measure the privacy of an algorithm, and such a granular task may place too high burden on the attacker to accurately detect ``small’’ amounts of privacy leakage.

In an attribute inference attack \citep{bun2021violation, stadler2022groundhog}, the attacker tries to infer a sensitive attribute from a particular sample, based on its non-sensitive attributes and the attacked algorithm output. It has been argued that attribute inference is really the entire goal of statistical learning, and therefore should not be considered a privacy violation \citep{bun2021violation, jayaraman2022attribute}.

\paragraph{Differential Privacy (DP)}
DP \citep{dwork2014dp} and its variants \citep{mironov2017renyi, dwork2016concentrated} offer strong, information-theoretic privacy guarantees. A DP (probabilistic) algorithm is one in which the probability law of its output does not change much if one sample in its input is changed. That is, if $D$ and $D'$ are two \emph{adjacent} datasets (i.e., two datasets which differ in exactly one element), then the algorithm $\cA$ is $\e$-DP if $\P(\cA(D) \in S) \leq e^\e \P(\cA(D') \in S)$ for any subset $S$ of the output space. DP has many desirable properties, such as the ability to compose DP methods or post-process the output without losing guarantees. Many simple ``wrapper'' methods are also available for certifying DP. Among the simplest, the Laplace mechanism, adds Laplace noise to the algorithm output. The noise level must generally depend on the \emph{sensitivity} of the base algorithm, which measures how much a single input sample can change the algorithm's output. The method we propose in this work is similar to the Laplace mechanism, but we show that the amount of noise needed can be reduced drastically. 
\cite{abadi2016deepdp} introduced DP-SGD, a powerful tool enabling DP to be combined with deep learning, based on a small modification to the standard gradient descent training. However, enforcing DP does not come without a cost. Enforcing DP with high levels of privacy (small $\e$) often comes with sharp decreases in algorithm utility \citep{tao2021benchmarking, stadler2022groundhog}. DP is also difficult to audit; it must be proven mathematically for a given algorithm. Checking it empirically is generally computationally intractable \citep{gilbert2018testdp}. The difficulty of checking DP has led to widespread implementation issues (and even errors due to finite machine precision), which invalidate the guarantees of DP \citep{jagielski2020auditing}.

While the basic definition of DP can be difficult to interpret, equivalent ``operational'' definitions have been developed \citep{wasserman2010statistical, kairouz2015composition, nasr2021adversary}. These works show that DP can equivalently be expressed in terms of the maximum success rate on an adversary which seeks to distinguish between two adjacent datasets $D$ and $D'$, given only the output of a DP algorithm. While similar to the setting of membership inference at face value, there are subtle differences. In particular, in the case of membership inference, one must consider \emph{all} datasets which could have contained the target record, and \emph{all} datasets which do not contain the target record, and distinguish between the algorithm's output in this larger space of possibilities.

Lastly, the independent work of \cite{thudi2022bounding} specifically applies DP to bound membership inference rates, and our results in Sec.~\ref{sec: relation to dp} complement theirs on the relationship between membership inference and DP. However, our results show that DP is not \emph{required} to prevent membership inference; it is merely one option, and we give alternative methods for defending against membership inference.


\section{Membership Inference Privacy}
In this section, we will motivate and define membership inference privacy and derive several theoretical results pertaining to it. We will provide proof sketches for many of our results, but the complete proofs all propositions, theorems, etc., can be found in Appendix~\ref{sec: proofs}.
\subsection{Notation}
We make use of the following notation. We will use $\cD$ to refer to our entire dataset, which consists of $n$ samples all of which must remain private. We will use $\bx \in \cD$ or $\bx^* \in \cD$ to refer to a particular sample. $\train \subseteq \cD$ refers to a size-$k$ subset of $\cD$. We will assume the subset is selected randomly, so $\train$ is a random variable. The remaining data $\cD\setminus\train$ will be referred to as the holdout data. We denote by $\D$ the set of all size-$k$ subsets of $\cD$ (i.e., all possible training sets), and we will use $D \in \D$ to refer to a particular realization of the random variable $\train$. Finally, given a particular sample $\bx^* \in \cD$, $\bDin$ (resp. $\bDout$) will refer to sets $D \in \D$ for which $\bx^* \in D$ (resp. $\bx^* \not\in D$).

\subsection{Theoretical Motivation}
The implicit assumption behind the public release of any statistical algorithm--be it a generative or predictive ML model, or even the release of simple population statistics--is that it is acceptable for \emph{statistical information about the modelled data} to be released publicly. In the context of membership inference, this poses a potential problem: if the population we are modeling is significantly different from the ``larger'' population, then if our algorithm's output contains any useful information whatsoever, it \emph{should} be possible for an attacker to infer whether or not a given record could have plausibly come from our training data or not.

We illustrate this concept with an example. Suppose the goal is to publish an ML model which predicts a patient's blood pressure from several biomarkers, specifically for patients who suffer from a particular chronic disease. To do this, we collect a dataset of individuals with confirmed cases of the disease, and use this data to train a linear regression model with coefficients $\hth$.
Formally, we let $\bx \in \R^d$ denote the features (e.g. biomarker values), $z \in \R$ denote the patient's blood pressure, and $y = \I\{\textrm{patient has the chronic disease in question}\}$. In this case, the private dataset $\train$ contains only the patients with $y=1$. Assume that in the general populace, patient features are drawn from a mixture model:
\begin{align*}
y \sim \Bern(p), \hspace{.25in} \bx \sim \cN(0, I), \hspace{.25in} z | \bx, y \sim \th_y^\T \bx, \hspace{.25in} \th_0 \neq \th_1.
\end{align*}
In the membership inference attack scenario, an adversary observes a data point $(\bx^*, z^*)$ and the model $\hth$, and tries to determine whether or not $(\bx^*, z^*) \in \train$. If $\th_0$ and $\th_1$ are well-separated, then an adversary can train an effective classifier to determine the corresponding label $\I\{(\bx^*, z^*) \in \train\}$ for $(\bx^*, z^*)$ by checking whether or not $z^* \approx \hth^\T \bx^*$. Since only data with $y=1$ belong to $\train$, this provides a signal to the adversary as to whether or not $\bx^*$ could have belonged to $\train$ or not. The point is that in this setting, this outcome is unavoidable if $\hth$ is to provide any utility whatsoever. In other words:
\begin{center}
\emph{In order to preserve utility, membership inference privacy must be measured with respect to the distribution from which the private data are drawn.}
\end{center}

The example above motivates the following theoretical ideal for our membership inference setting. Let $\cD = \{\bx_i\}_{i=1}^n$ be the private dataset and suppose that $\bx_i \iid \cP$ for some probability distribution $\cP$. (Note: Here, $\bx^*$ corresponds to the complete datapoint $(\bx^*, z^*)$ in the example above.) Let $\cA$ be our (randomized) algorithm, and denote its output by $\th = \cA(\cD)$. We generate a test point based on:
$$y^* \sim \Bern\l(1/2\r), \hspace{.25in} \bx^* | y^* \sim y^* \textrm{Unif}(\train) + (1-y^*) \cP,$$
i.e. $\bx^*$ is a fresh draw from $\cP$ or a random element of the private training data with equal probability. Let $\cI$ denote any membership inference algorithm which takes as input $\bx^*$ and the algorithm's output $\th = \cA(\train)$. The notion of privacy we wish to enforce is that $\cI$ cannot do much better to ascertain the membership of $\bx^*$ than guessing randomly:
\begin{equation} \label{eq: rip theory}
    \P_{\cA, \train}(\cI(\bx^*, \th) = y^*) \leq 1/2 + \n,
\end{equation}
where ideally $\n \ll 1/2$.

\subsection{Practical Definition} \label{sec: practical}
In reality, we do not have access to the underlying distribution $\cP$. Instead, we propose to use a bootstrap sampling approach to approximate fresh draws from $\cP$.

\begin{defn}[Membership Inference Privacy (MIP)] \label{def: rip}
Given fixed $k \leq n$, let $\train \subseteq \cD$ be a size-$k$ subset chosen uniformly at random from the elements in $\cD$. For $\bx^* \in \cD$, let $y^* = \I\{\bx^* \in \train\}$. An algorithm $\cA$ is $\n$-MIP with respect to $\cD$ if for any identification algorithm $\cI$ and for every $\bx^* \in \cD$, we have
$$ \P(\cI(\bx^*, \cA(\train)) = y^*) \leq \max\l\{\frac{k}{n}, 1-\frac{k}{n}\r\} + \n. $$
Here, the probability is taken over the uniformly random size-$k$ subset $\train \subseteq \cD$, as well as any randomness in $\cA$ and $\cI$.
%
\end{defn}

Definition~\ref{def: rip} states that given the output of $\cA$, an adversary cannot determine whether a given point was in the holdout set or training set with probability more than $\n$ better than always guessing the a priori more likely outcome. In the remainder of the paper, we will set $k=n/2$, so that $\cA$ is $\n$-MIP if an attacker cannot have average accuracy greater than $(1/2 + \n)$. This gives the largest a priori entropy for the attacker's classification task, which creates the highest ceiling on how much of an advantage an attacker can possibly gain from the algorithm's output, and consequently the most accurate measurement of privacy leakage. The choice $k=n/2$ also keeps us as close as possible to the theoretical motivation in the previous subsection. We note that analogues of all of our results apply for general $k$.

The definition of MIP is phrased with respect to \emph{any} classifier (whose randomness is independent of the randomness in $\cA$; if the adversary knows the algorithm and the random seed, we are doomed). While this definition is compelling in that it shows a bound on what any attacker can hope to accomplish, the need to consider all possible attack algorithms makes it difficult to work with technically. The following proposition shows that MIP is equivalent to a simpler definition which does not need to simultaneously consider all identification algorithms $\cI$.

\begin{restatable}{prop}{ripequiv} \label{thm: rip equiv}
Let $\A = \range(\cA)$ and let $\mu$ denote the probability law of $\cA(\train)$. Then $\cA$ is $\n$-MIP if and only if
\begin{align*}
\int_\A \Big(\max\big\{\P(\bx^*\in \train \: | \: \cA(\train) = A), \P(\bx^*\not\in\train \: | \: \cA(\train) = A)\big\} \, d\mu(A) \Big) \leq \frac12 + \n.
\end{align*}
Furthermore, the optimal adversary is given by
$$ \cI(\bx^*, A) = \I\{\P(\bx^* \in \train \: | \: \cA(\train) = A) \geq 1/2\}. $$
\end{restatable}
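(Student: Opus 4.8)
The plan is to recognize that, for a fixed target $\bx^* \in \cD$, recovering $y^* = \I\{\bx^* \in \train\}$ from the observed output $A = \cA(\train)$ is precisely a binary classification (MAP estimation) problem, for which the Bayes rule is optimal. The proposition then amounts to computing the Bayes accuracy of this problem and writing out the Bayes classifier explicitly. I would fix $\bx^*$ throughout and show that the supremum of $\P(\cI(\bx^*, \cA(\train)) = y^*)$ over all admissible $\cI$ equals the stated integral; the equivalence with $\n$-MIP then follows by quantifying over every $\bx^* \in \cD$, since MIP demands the bound for each target (and we use $k = n/2$, so $\max\{k/n,\, 1-k/n\} = 1/2$).

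First I would condition on the output. By the tower property, for any $\cI$,
$$\P(\cI(\bx^*, \cA(\train)) = y^*) = \int_\A \P\big(\cI(\bx^*, A) = y^* \mid \cA(\train) = A\big)\, d\mu(A).$$
Because the classifier's internal randomness is independent of $\cA$, for each fixed $A$ it predicts label $1$ with some probability $q = q(A) \in [0,1]$ independently of $y^*$, so its conditional success probability factors as
$$q\,\P(\bx^* \in \train \mid \cA(\train) = A) + (1-q)\,\P(\bx^* \not\in \train \mid \cA(\train) = A),$$
a convex combination of the two posteriors. This is maximized by placing all weight on the larger posterior, giving the pointwise (in $A$) bound $\max\{\P(\bx^* \in \train \mid A),\, \P(\bx^* \not\in \train \mid A)\}$, attained by the deterministic rule that predicts $1$ exactly when $\P(\bx^* \in \train \mid A) \geq \tfrac12$ --- which is the proposed adversary $\cI(\bx^*, A) = \I\{\P(\bx^* \in \train \mid \cA(\train) = A) \geq 1/2\}$. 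The tie-breaking convention is immaterial, since both labels give equal conditional accuracy when the posterior equals $1/2$.

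Integrating the pointwise bound against $\mu$, and noting it is achieved by the Bayes rule above, yields
$$\sup_{\cI}\ \P(\cI(\bx^*, \cA(\train)) = y^*) = \int_\A \max\big\{\P(\bx^* \in \train \mid \cA(\train) = A),\, \P(\bx^* \not\in \train \mid \cA(\train) = A)\big\}\, d\mu(A),$$
with the supremum attained by the stated $\cI$. Since $\cA$ is $\n$-MIP precisely when this supremum is at most $1/2 + \n$ for every $\bx^* \in \cD$, this gives the claimed equivalence and simultaneously identifies the optimal adversary.

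The main obstacle is measure-theoretic rather than conceptual: I must ensure the regular conditional probabilities $\P(\bx^* \in \train \mid \cA(\train) = A)$ are well-defined $\mu$-a.e.\ and measurable in $A$, and that the supremum over $\cI$ may be interchanged with the integral. The former holds under mild regularity on the output space (e.g.\ $\A$ Polish, so a regular conditional distribution exists). The latter requires no genuine exchange of limits, because the pointwise-optimal choice $q(A) = \I\{\P(\bx^* \in \train \mid A) \geq 1/2\}$ is itself a measurable, valid classifier, so the supremum is realized by a single $\cI$. Finally, the convex-combination form above makes it immediate that randomized classifiers cannot outperform the best deterministic one, which closes the argument.
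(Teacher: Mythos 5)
Your proof is correct and takes essentially the same approach as the paper's: both reduce to the pointwise (in $A$) observation that the adversary's success probability is an affine function of its probability of predicting $1$ given the output, which is maximized by the Bayes rule $\I\{\P(\bx^*\in\train \mid \cA(\train)=A)\geq 1/2\}$, and then integrate over the law of $\cA(\train)$. The differences are only presentational: the paper carries out the computation with joint sums over training sets and converts to posteriors via Bayes' rule at the end, while you condition on the output from the start and are somewhat more careful about measurability and the role of the adversary's independent randomness.
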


Proposition~\ref{thm: rip equiv} makes precise the intuition that the optimal attacker should guess the more likely of $\bx^*\in\train$ or $\bx^*\not\in\train$ conditional on the output of $\cA$. The optimal attacker's overall accuracy is then computed by marginalizing this conditional statement.

Finally, MIP also satisfies a post-processing inequality similar to the classical result in DP \citep{dwork2014dp}. This states that any local functions of a MIP algorithm's output cannot degrade the privacy guarantee.

\begin{thm} \label{thm: post processing}
Suppose that $\cA$ is $\n$-MIP, and let $f$ be any (potentially randomized, with randomness independent of $\train$) function. Then $f\circ \cA$ is also $\n$-MIP. 
\end{thm}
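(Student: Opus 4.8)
The plan is to prove the contrapositive by a reduction: any membership inference adversary that breaks $f\circ\cA$ can be converted into an adversary that breaks $\cA$ with \emph{exactly} the same success probability, so the $\n$-MIP guarantee for $\cA$ forces the same guarantee for $f\circ\cA$. This mirrors the standard post-processing argument for DP, but I would phrase it through the adversary formulation in Definition~\ref{def: rip} rather than the conditional-probability characterization, which lets the argument handle general $k$ without invoking the $k=n/2$ form of Proposition~\ref{thm: rip equiv}.

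First I would fix a sample $\bx^*\in\cD$ and an arbitrary identification algorithm $\cI$ acting on the output of $f\circ\cA$, with randomness independent of $\train$, $\cA$, and $f$. I then define a new adversary $\cI'$ against $\cA$ as follows: on input $(\bx^*, A)$ with $A=\cA(\train)$, the adversary $\cI'$ draws fresh randomness to evaluate $f(A)$ and fresh randomness to run $\cI$, and outputs $\cI(\bx^*, f(A))$. Because $f$'s randomness is independent of $\train$ and $\cI$'s randomness is independent of everything, all of the auxiliary randomness used by $\cI'$ is independent of $\cA$ and $\train$; hence $\cI'$ is a legitimate adversary against $\cA$ in the sense of Definition~\ref{def: rip}.

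Next I would check that the two success probabilities coincide. By construction the triple (training set, target label, adversary input) has the same joint law in both games: in the $f\circ\cA$ game the adversary sees $(\bx^*,(f\circ\cA)(\train))$, whose second coordinate is $\cA(\train)$ post-processed by an independent copy of $f$; in the $\cA$ game with adversary $\cI'$, the internal computation reproduces exactly this output before passing it to $\cI$. Therefore
\begin{align*}
\P\big(\cI(\bx^*,(f\circ\cA)(\train)) = y^*\big) = \P\big(\cI'(\bx^*,\cA(\train)) = y^*\big),
\end{align*}
where $y^*=\I\{\bx^*\in\train\}$ and both probabilities are taken over the random subset $\train$ together with all internal randomness.

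Finally, since $\cA$ is $\n$-MIP, the right-hand side is at most $\max\{k/n,\,1-k/n\}+\n$ for every $\bx^*$ and every such $\cI'$; hence so is the left-hand side. As $\cI$ and $\bx^*$ were arbitrary, $f\circ\cA$ satisfies Definition~\ref{def: rip}, i.e.\ it is $\n$-MIP. I expect the only delicate point to be the bookkeeping of independence, namely arguing that folding $f$'s randomness into the adversary preserves the ``independent of $\cA$'' requirement while leaving the joint distribution of the target label and the adversary's input unchanged; once the randomness is organized correctly, the equality of success probabilities is immediate and the theorem follows.
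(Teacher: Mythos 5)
Your proposal is correct and follows essentially the same argument as the paper: both define a composed adversary $\cI'(\bx^*, A) = \cI(\bx^*, f(A))$ against $\cA$, equate its success probability with that of $\cI$ against $f\circ\cA$, and invoke the $\n$-MIP guarantee of $\cA$. Your extra bookkeeping about independence of $f$'s randomness (and stating the bound for general $k$ rather than $k=n/2$) is a welcome clarification but does not change the route.
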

\begin{proof}
Let $\cI_f$ be any membership inference algorithm for $f\circ \cA$. Define $\cI_\cA(\bx^*, \cA(\train)) = \cI_f(\bx^*, f(\cA(\train)))$. Since $\cA$ is $\n$-MIP, we have
\begin{align*}
\frac12 + \n \geq \P(\cI_\cA(\bx^*, \cA(\train)) = y^*) = \P(\cI_f(\bx^*, f(\cA(\train))) = y^*).
\end{align*}
Thus, $f\circ \cA$ is $\n$-MIP by Definition~\ref{def: rip}.
\end{proof}

For example, Theorem~\ref{thm: post processing} is important for the application of MIP to generative model training -- if we can guarantee that our generative model is $\n$-MIP, then any output produced by it is $\n$-MIP as well.

\subsection{Relation to Differential Privacy} \label{sec: relation to dp}
In this section, we make precise the relationship between MIP and the most common theoretical formulation of privacy: differential privacy (DP). We provide proof sketches for most of our results here; detailed proofs can be found in the Appendix. Our first theorem shows that DP is at least as strong as MIP.
\begin{restatable}{thm}{dptorip} \label{thm: dp implies rip}
Let $\cA$ be $\e$-DP. Then $\cA$ is $\n$-MIP with $\n = \frac{1}{1+e^{-\e}} - \frac12$. Furthermore, this bound is tight, i.e. for any $\e > 0$, there exists an $\e$-DP algorithm against which the optimal attacker has accuracy $\frac{1}{1+e^{-\e}}$.
\end{restatable}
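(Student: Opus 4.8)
The plan is to reduce the claim to a pointwise statement about the posterior probabilities appearing in Proposition~\ref{thm: rip equiv}, and then to transfer the $\e$-DP guarantee---which constrains only \emph{adjacent} datasets---onto the two mixture densities that govern membership. Throughout I take $k = n/2$, so that $\P(\bx^* \in \train) = \P(\bx^* \notin \train) = 1/2$. For a dataset $D$ let $f_D$ denote the density of $\cA(D)$ with respect to a common base measure, and let $p_{\mathrm{in}} = |\bDin|^{-1}\sum_{D \in \bDin} f_D$ and $p_{\mathrm{out}} = |\bDout|^{-1}\sum_{D \in \bDout} f_D$ be the densities of $\cA(\train)$ conditioned on $\bx^* \in \train$ and on $\bx^* \notin \train$, respectively. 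Bayes' rule with the uniform prior then gives, for $\mu$-a.e.\ output $A$,
\[
\max\{\P(\bx^*\in\train \mid \cA(\train) = A),\ \P(\bx^*\notin\train \mid \cA(\train) = A)\} = \frac{\max\{p_{\mathrm{in}}(A), p_{\mathrm{out}}(A)\}}{p_{\mathrm{in}}(A) + p_{\mathrm{out}}(A)}.
\]

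The heart of the argument is to show the likelihood ratio is pinched: $e^{-\e} \le p_{\mathrm{in}}(A)/p_{\mathrm{out}}(A) \le e^\e$ for a.e.\ $A$. This is where the combinatorics of $\D$ enters, and I expect it to be the main obstacle, since DP only compares datasets differing in a single element while $p_{\mathrm{in}}$ and $p_{\mathrm{out}}$ average over sets that may be arbitrarily far apart. The key observation is that each $D \in \bDin$ is adjacent (swap $\bx^*$ for an outside point) to exactly $n - k$ sets in $\bDout$, while each $D' \in \bDout$ is adjacent to exactly $k$ sets in $\bDin$; for $k = n/2$ these degrees coincide and $|\bDin| = |\bDout|$. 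I would bound $f_D(A)$ by the average of $e^\e f_{D'}(A)$ over the neighbors $D'$ of $D$ (legitimate since $f_D(A) \le e^\e f_{D'}(A)$ for each adjacent pair by $\e$-DP), sum over $D \in \bDin$, and interchange the order of summation; the biregularity makes each term $f_{D'}(A)$ appear the same number of times, yielding $p_{\mathrm{in}}(A) \le e^\e p_{\mathrm{out}}(A)$. The reverse inequality follows from the symmetric double count. (The same cancellation in fact persists for general $k$ via the identity $|\bDin|/|\bDout| = k/(n-k)$, which is why analogues hold beyond $k=n/2$.)

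Combining the two steps, for a.e.\ $A$ the displayed maximum equals $\bigl(1 + \min\{p_{\mathrm{in}}(A),p_{\mathrm{out}}(A)\}/\max\{p_{\mathrm{in}}(A),p_{\mathrm{out}}(A)\}\bigr)^{-1} \le (1 + e^{-\e})^{-1}$; integrating this constant bound against the probability measure $\mu$ and invoking Proposition~\ref{thm: rip equiv} shows that $\cA$ is $\n$-MIP with $\n = \frac{1}{1+e^{-\e}} - \frac12$.

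For tightness I would exhibit a mechanism making every inequality above an equality. Fix any $\bx^* \in \cD$ and let $\cA$ release the single bit $q(D) = \I\{\bx^* \in D\}$ via randomized response: output $q(D)$ with probability $\frac{e^\e}{1+e^\e}$ and $1 - q(D)$ otherwise. Since $q$ changes by at most one when a single element of the dataset is swapped, $\cA$ is $\e$-DP. But $\cA(\train)$ is now simply a noisy copy of the true label $y^* = \I\{\bx^* \in \train\}$, and under the uniform prior the optimal attacker of Proposition~\ref{thm: rip equiv} just reports the observed bit, achieving accuracy $\frac{e^\e}{1+e^\e} = \frac{1}{1+e^{-\e}}$. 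This matches the upper bound and establishes tightness.
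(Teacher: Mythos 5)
Your proof is correct, and the interesting point is that it matches the paper's \emph{main-text sketch} rather than its official appendix proof. Your core step---the pointwise pinch $e^{-\e} \le p_{\mathrm{in}}(A)/p_{\mathrm{out}}(A) \le e^{\e}$ obtained by double counting over the $(k, n-k)$-biregular bipartite graph on $\bDin \cup \bDout$, followed by integration via Proposition~\ref{thm: rip equiv}---is precisely the content of Proposition~\ref{thm: what is dp} and the sketch that invokes it; the only cosmetic difference is that you bound each $f_D(A)$ by the \emph{average} of $e^\e f_{D'}(A)$ over neighbors, where the paper's proof of Proposition~\ref{thm: what is dp} uses the neighborhood \emph{minimum}, and your version extends transparently to general $k$ via $|\bDin|/|\bDout| = k/(n-k)$. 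The paper's appendix proof of the theorem itself is genuinely different: it builds an injection $f:\bDout\to\bDin$ with $D \sim f(D)$ via Hall's marriage theorem (Lemma~\ref{thm: inj}) and charges each term $\P(\cI(\bx^*,\cA(\Dout))=0)$ to its matched partner. That route buys an additive-slack analysis (it handles $(\e,\d)$-DP, ending at $e^\e \tfrac{k}{n} + \d\tfrac{n-k}{n}$), but as written it yields only $e^\e/2$ at $k=n/2$, $\d=0$, which is weaker than the stated $\frac{1}{1+e^{-\e}}$ and vacuous for $\e > \ln 2$; the sharp constant genuinely requires the posterior-ratio argument you use. Your tightness construction (randomized response on the bit $\I\{\bx^* \in \train\}$) is the same as the paper's $\cD=\{0,1\}$, $k=1$ example, just phrased for a general dataset, and is equally valid.
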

\begin{proof}[Proof sketch]
Let $p = \P(\bx^* \in \train \: | \: \cA(\train))$ and $q = \P(\bx^* \not\in \train \: | \: \cA(\train))$ and suppose without loss of generality that $q\geq p$. We have $p+q=1$ and by Proposition~\ref{thm: what is dp} below, $q/p \leq e^\e$. This implies that $q \leq \frac{1}{1+e^{-\e}}$, and applying Proposition~\ref{thm: rip equiv} gives the desired result.

For the tightness result, there is a simple construction on subsets of size 1 of $\cD = \{0,1\}$. Let $p = \frac{1}{1+e^{-\e}}$ and $q = 1-p$. The algorithm $\cA(D)$ which outputs $D$ with probability $p$ and $\cD \setminus D$ with probability $q$ is $\e$-DP, and the optimal attacker has exactly the accuracy given in the theorem.
\end{proof}
%
To help interpret this result, we remark that for $\e \approx 0$, we have $\frac{1}{1+e^{-\e}} - \frac12 \approx \e/4$. Thus in the regime where strong privacy guarantees are required ($\n \approx 0$), $\n \approx \e/4$.

In fact, DP is \emph{strictly} stronger than MIP, which we make precise with the following theorem.
\begin{restatable}{thm}{riptodp} \label{thm: rip weaker than dp}
For any $\n > 0$, there exists an algorithm $\cA$ which is $\n$-MIP but not $\e$-DP for any $\e < \infty$.
\end{restatable}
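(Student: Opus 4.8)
The plan is to exploit the fundamental difference between the two notions: $\e$-DP is a \emph{worst-case} guarantee (a likelihood ratio bound that must hold for \emph{every} output event and \emph{every} adjacent pair of datasets), whereas $\n$-MIP is an \emph{average-case} guarantee (an integral of the optimal attacker's conditional advantage against the law $\mu$ of $\cA(\train)$, as in Proposition~\ref{thm: rip equiv}). This gap suggests building an algorithm that leaks catastrophically but only on a rare event, so that the worst-case DP ratio blows up to infinity while the expected advantage stays below $\n$. Concretely, I would fix $k = n/2$ (so that $\max\{k/n, 1-k/n\} = 1/2$) and, for a parameter $\delta \in (0,1]$ to be chosen, define $\cA$ on input $D$ to output a fixed uninformative symbol $\perp$ with probability $1-\delta$, and to output $D$ itself (the exact training set) with probability $\delta$.

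First I would verify that $\cA$ is not $\e$-DP for any finite $\e$. Take two distinct adjacent size-$k$ datasets $D \neq D'$ and the singleton output event $S = \{D\}$. On input $D$ the leak branch produces $D$, so $\P(\cA(D) \in S) = \delta > 0$; on input $D'$ the leak branch produces $D' \neq D$, so $\P(\cA(D') \in S) = 0$. The likelihood ratio is therefore infinite, which rules out $\e$-DP for every finite $\e$. Note that this argument uses only that $\cA$ discloses its input exactly on a positive-probability event, so it is insensitive to the precise definition of adjacency.

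Next I would check the $\n$-MIP bound via the equivalent characterization in Proposition~\ref{thm: rip equiv}, where the key quantity is the posterior $\P(\bx^* \in \train \mid \cA(\train) = A)$ for the two kinds of outputs. When $A = \perp$, the output is independent of $\train$, so the posterior equals the prior $k/n = 1/2$ and the $\max$ term is $1/2$; when $A \in \D$ is an actual subset, the output reveals $\train = A$ exactly, so the posterior is $\I\{\bx^* \in A\} \in \{0,1\}$ and the $\max$ term is $1$. Since the $\perp$ branch carries mass $1-\delta$ and the leak branch carries mass $\delta$ regardless of the realization of $\train$, the integral in Proposition~\ref{thm: rip equiv} evaluates to $(1-\delta)\cdot\tfrac12 + \delta\cdot 1 = \tfrac12 + \tfrac{\delta}{2}$. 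Choosing $\delta = 2\n$ (valid whenever $\n \leq 1/2$; for larger $\n$ the MIP bound exceeds $1$ and is satisfied trivially by taking $\delta = 1$) yields exactly $\tfrac12 + \n$, so $\cA$ is $\n$-MIP.

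I expect the main point to be careful about — rather than a genuine obstacle — to be confirming that the MIP bound holds \emph{uniformly over every} $\bx^* \in \cD$, not merely on average over a random target. This is exactly where the construction's symmetry pays off: conditional on the uninformative output the posterior is the prior $1/2$ for \emph{every} point, and conditional on an exact leak the $\max$ term is $1$ for \emph{every} point, so the per-point integral equals $\tfrac12 + \tfrac{\delta}{2}$ independently of $\bx^*$. The conceptual crux, which the construction is designed to lay bare, is simply that a rare exact disclosure is invisible to the expectation-based MIP criterion but fatal to the ratio-based DP criterion.
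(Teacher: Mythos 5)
Your proof is correct and follows essentially the same route as the paper's: both construct an algorithm that discloses its input on a low-probability event, which forces an infinite likelihood ratio on a positive-probability output (killing $\e$-DP for every finite $\e$) while the marginal MIP integral stays within $\n$ of $\tfrac12$. The only difference is cosmetic --- the paper leaks each element of $\train$ independently with probability $p$ and lets $p \to 0$, whereas your all-or-nothing leak with $\delta = 2\n$ reaches the same conclusion with an exact, cleaner computation via Proposition~\ref{thm: rip equiv}.
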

\begin{proof}[Proof sketch]
The easiest example is an algorithm which publishes each sample in its input set with extremely low probability. Since the probability that any given sample is published is low, the probability that an attacker can do better than guess randomly is low marginally over the algorithm's output. However, adding a sample to the input dataset changes the probability of that sample's being published from 0 to a strictly positive number, so the guarantee on probability ratios required for DP is infinite.
\end{proof}

In order to better understand the difference between DP and MIP, let us again examine Proposition~\ref{thm: rip equiv}. Recall that this proposition showed that \emph{marginally} over the output of $\cA$, the conditional probability that $\bx^* \in \train$ given the algorithm output should not differ too much from the unconditional probability that $\bx^* \in \train$. The following proposition shows that DP requires this condition to hold for \emph{every} output of $\cA(\train)$.

\begin{restatable}{prop}{whatisdp} \label{thm: what is dp}
If $\cA$ is an $\e$-DP algorithm, then for any $\bx^*$, we have
$$ \frac{\P(\bx^* \not\in \train \: | \: \cA(\train))}{\P(\bx^* \in \train \: | \: \cA(\train))} \leq e^\e \frac{\P(\bx^* \not\in \train)}{\P(\bx^* \in \train)}.$$
\end{restatable}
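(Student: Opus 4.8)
The plan is to reduce the posterior ratio to a likelihood ratio via Bayes' rule, and then to control that likelihood ratio by a combinatorial double-counting argument over adjacent datasets. Writing $f_D$ for the density of $\cA(D)$ at a generic output $A$, I would first observe that conditioning on $\bx^* \in \train$ makes $\train$ uniform over $\bDin$, while conditioning on $\bx^* \notin \train$ makes it uniform over $\bDout$. Hence the two conditional output likelihoods are the uniform averages $\frac{1}{|\bDin|}\sum_{D \in \bDin} f_D(A)$ and $\frac{1}{|\bDout|}\sum_{D \in \bDout} f_D(A)$, and Bayes' rule rewrites the left-hand side of the proposition as this likelihood ratio (out over in) times the prior ratio $\P(\bx^*\notin\train)/\P(\bx^*\in\train)$. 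Since the prior ratio already appears on the right-hand side of the claim, it suffices to show that the likelihood ratio is at most $e^\e$.

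Next I would set up a bipartite graph whose two sides are $\bDout$ and $\bDin$, joining $D \in \bDout$ to $D' \in \bDin$ exactly when $D' = (D\setminus\{\bx_j\})\cup\{\bx^*\}$ for some $\bx_j \in D$. Such a pair differs in exactly one element (we swap $\bx_j$ for $\bx^*$), so it is adjacent and $\e$-DP supplies the edgewise bound $f_D(A) \le e^\e f_{D'}(A)$. The key structural fact is that this graph is biregular: each $D \in \bDout$ has exactly $k$ neighbours (one per element it can trade for $\bx^*$), while each $D' \in \bDin$ has exactly $n-k$ neighbours (one per element outside $D'$ that can replace $\bx^*$). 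Summing the DP inequality over all edges and regrouping by the degree on each side yields $k\sum_{D\in\bDout} f_D(A) \le e^\e (n-k)\sum_{D'\in\bDin} f_{D'}(A)$.

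Finally I would assemble the pieces. Dividing by $k$ gives $\sum_{D\in\bDout} f_D(A) \le e^\e \tfrac{n-k}{k}\sum_{D'\in\bDin} f_{D'}(A)$, so the likelihood ratio $\frac{|\bDin|}{|\bDout|}\cdot\frac{\sum_{\bDout} f_D(A)}{\sum_{\bDin} f_{D'}(A)}$ is at most $e^\e \tfrac{n-k}{k}\cdot\frac{|\bDin|}{|\bDout|}$. Using $|\bDin| = \binom{n-1}{k-1}$ and $|\bDout| = \binom{n-1}{k}$, so that $|\bDin|/|\bDout| = k/(n-k)$, the combinatorial factors cancel exactly and the likelihood ratio is at most $e^\e$. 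Multiplying by the prior ratio then recovers the stated inequality. The step I expect to be the main obstacle is the biregular double-counting: applying DP "once" fails because $\bDin$ and $\bDout$ are large families rather than a single adjacent pair and do not match up one-to-one, and the crucial insight is that averaging the edgewise bound over the biregular pairing produces precisely the factor $\tfrac{n-k}{k}$ needed to cancel the normalization, introducing no slack beyond $e^\e$. Everything else (Bayes' rule, the density interpretation of continuous-output DP, and the binomial counts) is routine.
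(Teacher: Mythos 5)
Your proof is correct and follows essentially the same route as the paper's: both reduce the posterior ratio to a ratio of summed likelihoods via Bayes' rule, and both exploit the $(k, n-k)$-biregular bipartite graph between $\bDout$ and $\bDin$ to double-count the edgewise DP inequality, with the resulting factor $\frac{n-k}{k}$ cancelling against the prior ratio. The only cosmetic difference is that you sum $f_D(A) \leq e^\e f_{D'}(A)$ over all edges directly, whereas the paper first bounds each $f_D(A)$ by $e^\e$ times the minimum over its neighbors and then averages --- the same double count in slightly different packaging.
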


Proposition~\ref{thm: what is dp} can be thought of as an extension of the Bayesian interpretation of DP explained by \cite{jordon2022synthetic}. Namely, the definition of DP immediately implies that, for any two adjacent sets $D$ and $D'$,
$$ \frac{\P(\train = D \: | \: \cA(\train))}{\P(\train = D' \: | \: \cA(\train))} \leq e^\e \frac{\P(\train = D)}{\P(\train = D')}. $$
We remark that the proof of Proposition~\ref{thm: what is dp} indicates that converting between the case of distinguishing between two adjacent datasets (as in the inequality above, and as done in \citep{wasserman2010statistical, kairouz2015composition, nasr2021adversary}) vs. the case of membership inference is non-trivial: both our proof and a similar one by \cite{thudi2022bounding} require the construction of a injective function between sets which do/do not contain $\bx^*$.

\section{Guaranteeing MIP via Noise Addition}

In this section, we show that a small modification to standard training procedures can be used to guarantee MIP. Suppose that $\cA$ takes as input a data set $D$ and produces output $\th \in \R^d$. For instance, $\cA$ may compute a simple statistical query on $D$, such as mean estimation, but our results apply equally well in the case that e.g. $\cA(D)$ are the weights of a neural network trained on $D$. If $\th$ are the weights of a generative model, then if we can guarantee MIP for $\th$, then by the data processing inequality (Theorem~\ref{thm: post processing}), this guarantees privacy for any output of the generative model.

The distribution over training data (in our case, the uniform distribution over size $n/2$ subsets of our complete dataset $\cD$) induces a distribution over the output $\th$. The question is, what is the smallest amount of noise we can add to $\th$ which will guarantee MIP? If we add noise on the order of $\max_{D\sim D'\subseteq \cD} \|\cA(D) - \cA(D')\|$, then we can adapt the standard proof for guaranteeing DP in terms of algorithm sensitivity to show that a restricted version of DP (only with respect subsets of $\cD$) holds in this case, which in turn guarantees MIP.
However, recall that by Propositions~\ref{thm: rip equiv} and \ref{thm: what is dp}, MIP is only asking for a \emph{marginal} guarantee on the change in the posterior probability of $D$ given $A$, whereas DP is asking for a \emph{conditional} guarantee on the posterior. So while $\max$ seems necessary for a conditional guarantee, the \emph{moments} of $\th$ should be sufficient for a marginal guarantee. Theorem~\ref{thm: small noise} shows that this intuition is correct.

\begin{restatable}{thm}{smallnoise} \label{thm: small noise}
Let $\|\cdot\|$ be any norm, and let $\s^M \geq \E\|\th - \E\th\|^M$ be an upper bound on the $M$-th central moment of $\th$ with respect to this norm over the randomness in $\train$ and $\cA$. Let $X$ be a random variable with density proportional to $\exp(-\frac{1}{c\s}\|X\|)$ with $c=(\const/\n)^{1+\frac2M}$. Finally, let $\hth = \th + X$.
Then $\hth$ is $\n$-MIP, i.e., for any adversary $\cI$,
$$ \P(\cI(\bx^*, \hth) = y^*) \leq 1/2 + \n. $$
\end{restatable}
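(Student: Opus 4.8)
The plan is to work through the marginal characterization of MIP in Proposition~\ref{thm: rip equiv} rather than with Definition~\ref{def: rip} directly. Write $\mu$ for the law of the noised output $\hth$, and let $\mu_{\mathrm{in}}$ and $\mu_{\mathrm{out}}$ denote the conditional laws of $\hth$ given $\bx^* \in \train$ and $\bx^* \not\in \train$ respectively, with densities $f_{\mathrm{in}}, f_{\mathrm{out}}$. Since $k = n/2$ gives equal priors $\P(\bx^*\in\train) = \P(\bx^*\not\in\train) = 1/2$, Bayes' rule yields $\P(\bx^*\in\train \mid \hth = A) = f_{\mathrm{in}}(A)/(f_{\mathrm{in}}(A)+f_{\mathrm{out}}(A))$ and $d\mu = \tfrac12(f_{\mathrm{in}}+f_{\mathrm{out}})\,dA$. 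Using $\max\{p,1-p\} = \tfrac12 + \tfrac12|2p-1|$, the left-hand side of Proposition~\ref{thm: rip equiv} evaluates to $\tfrac12 + \tfrac14\int|f_{\mathrm{in}}-f_{\mathrm{out}}|\,dA = \tfrac12 + \tfrac12\,\mathrm{TV}(\mu_{\mathrm{in}},\mu_{\mathrm{out}})$. Thus it suffices to prove the single clean inequality $\mathrm{TV}(\mu_{\mathrm{in}},\mu_{\mathrm{out}}) \le 2\n$, which is the form I would aim for throughout.

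\textbf{Smoothing through a common reference point.} Both conditional laws are convolutions: $\mu_{\mathrm{in}} = \nu_{\mathrm{in}} * \lambda$ and $\mu_{\mathrm{out}} = \nu_{\mathrm{out}} * \lambda$, where $\nu_{\mathrm{in}}, \nu_{\mathrm{out}}$ are the conditional laws of the clean output $\th$ and $\lambda$ is the noise law with density $\propto \exp(-\|x\|/(c\s))$. Let $\bar{\th} = \E\th$ be the marginal mean and let $\lambda_t$ denote $\lambda$ recentered at $t$. I would insert the common reference $\lambda_{\bar{\th}} = \d_{\bar{\th}} * \lambda$ and apply the triangle inequality, $\mathrm{TV}(\mu_{\mathrm{in}},\mu_{\mathrm{out}}) \le \mathrm{TV}(\mu_{\mathrm{in}},\lambda_{\bar{\th}}) + \mathrm{TV}(\lambda_{\bar{\th}},\mu_{\mathrm{out}})$, so it is enough to bound each half by $\n$. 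Since $\nu_{\mathrm{in}}*\lambda = \int \lambda_t\,d\nu_{\mathrm{in}}(t)$ and $\lambda_{\bar{\th}} = \int \lambda_{\bar{\th}}\,d\nu_{\mathrm{in}}(t)$, joint convexity of total variation gives $\mathrm{TV}(\mu_{\mathrm{in}},\lambda_{\bar{\th}}) \le \E_{\mathrm{in}}\big[\mathrm{TV}(\lambda_{\th},\lambda_{\bar{\th}})\big]$, and symmetrically for the ``out'' term. This converts the problem into controlling the \emph{expected single-shift total variation} of the noise.

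\textbf{Single-shift bound and moment control.} For a fixed shift $v = \th - \bar{\th}$, the triangle inequality for the norm gives the pointwise density-ratio bound $\ell(x-v)/\ell(x) \le \exp(\|v\|/(c\s))$, which yields $\mathrm{TV}(\lambda_{\th},\lambda_{\bar{\th}}) \le \tfrac12\big(\exp(\|v\|/(c\s)) - 1\big)$; this I would combine with the universal bound $\mathrm{TV}(\lambda_\th,\lambda_{\bar{\th}}) \le 1$. To take the expectation, I would split at a threshold $R$: on the bulk event $\{\|v\| \le R\}$ the exponential bound is small because the noise scale $c\s$ is large relative to $R$, while on the tail $\{\|v\| > R\}$ I would use $\mathrm{TV}\le 1$ together with Markov's inequality and the moment hypothesis, noting that the conditional moment satisfies $\E_{\mathrm{in}}\|\th-\bar{\th}\|^M \le 2\,\E\|\th-\bar{\th}\|^M \le 2\s^M$ (and likewise for ``out''), so that $\P(\|v\| > R) \le 2\s^M/R^M$. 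Optimizing $R$ against these two terms is what produces the explicit scaling $c = (\const/\n)^{1+2/M}$ and drives both halves below $\n$.

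\textbf{Main obstacle.} The conceptual reduction in the first two paragraphs is essentially forced; the technical heart is the last step, namely passing from a single central-moment bound on $\th$ to a bound on $\E[\mathrm{TV}(\lambda_\th,\lambda_{\bar{\th}})]$. The difficulty is that the per-shift total variation grows with $\|v\|$ but saturates at $1$, so neither the exponential bound alone (which blows up on the tails of $\th$) nor a crude worst-case shift suffices; the moment hypothesis must be spent precisely to make the tail contribution negligible while the bulk stays in the near-linear regime of the exponential bound. Getting the bulk/tail balance to yield the stated clean constant $\const$ and the exponent $1 + \tfrac{2}{M}$ is the fiddly part of the argument, but it is a routine optimization once the decomposition above is in place.
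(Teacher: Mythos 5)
Your proposal is correct, but it takes a genuinely different route from the paper's proof. The paper works directly with the posterior likelihood ratio $f(\hth) = \P(\train\in\bDin \mid \hth)/\P(\train\in\bDout \mid \hth)$, written explicitly as a ratio of sums of shifted Laplace densities; it sandwiches $f(\hth)$ near $1$ for \emph{every} $\hth$ in a ball of radius $\g\s c\log c$ (via a generalized Chebyshev inequality applied to the $a_i$'s and $b_j$'s), shows that ball has high probability, and feeds the good-event/bad-event split into Proposition~\ref{thm: rip equiv}; balancing the three exponents $1-\a$, $M\a-2\g$, $\g$ is what yields $c=(\const/\n)^{1+2/M}$. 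You instead collapse Proposition~\ref{thm: rip equiv} into the identity ``optimal accuracy $= \tfrac12 + \tfrac12\mathrm{TV}(\mu_{\mathrm{in}},\mu_{\mathrm{out}})$'' (valid here because $k=n/2$ gives equal priors), and then bound the total variation by convexity under mixtures, a single-shift density-ratio bound, and Markov's inequality on the tails. Each of your steps checks out: the single-shift bound follows from $|\ell(x-v)-\ell(x)| \le \min\{\ell(x-v),\ell(x)\}\l(e^{\|v\|/(c\s)}-1\r)$ integrated over $x$, and the factor $2$ in the conditional moment bound is correct since $\P(\bx^*\in\train)=1/2$.

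Two observations on the comparison. First, your optimization actually proves something \emph{stronger} than the theorem: taking $R=\s t$ with $t=(4/\n)^{1/M}$ makes the tail term $\n/2$, and the bulk term is below $\n/2$ once $c \ge 2t/\n$, i.e. $c \gtrsim \n^{-(1+1/M)}$ — a better exponent than the paper's $1+2/M$. So your closing claim that the optimization ``produces'' the scaling $c=(\const/\n)^{1+2/M}$ is slightly off; to conclude the theorem as stated you need one small additional remark, namely that your bound is monotone decreasing in $c$ (for fixed threshold $R$), so the prescribed, larger value of $c$ works a fortiori. Second, what each approach buys: the paper's pointwise control of the likelihood ratio is heavier and requires the case analysis on $\|\hth\|$, but localizes exactly where privacy can fail (outside an explicit ball); your TV route is shorter, more modular, avoids the multi-parameter balancing, and — if carried out with explicit constants — would improve the noise scale in the theorem itself.
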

\begin{proof}[Proof sketch]
The proof proceeds by bounding the posterior likelihood ratio $\frac{\P(\bx^* \not\in \train \: | \: \hth)}{\P(\bx^* \in \train \: | \: \hth)}$ from above and below for all $\hth$ in a large $\|\cdot\|$-ball. This in turn yields an upper bound on the max in the integrand in Proposition~\ref{thm: rip equiv} with high probability over $\cA(\train)$. The central moment $\s$ allows us to apply a generalized Chebyshev inequality to establish these bounds. The full proof is computationally intensive and the complete details can be found in the Appendix.
\end{proof}

At first glance, Theorem~\ref{thm: small noise} may appear to be adding noise of equal magnitude to all of the coordinates of $\th$, regardless of how much each contributes to the central moment $\s$. However, by carefully selecting the norm $\|\cdot\|$, we can add non-isotropic noise to $\th$ such that the marginal noise level reflects the variability of each specific coordinate of $\th$. This is the content of Corollary~\ref{thm: non iso noise}. ($\mathrm{GenNormal}(\mu, \a, \b)$ refers to the probability distribution with density proportional to $\exp( -((x-\mu)/\a)^\b )$.)

\begin{restatable}{cor}{noniso} \label{thm: non iso noise}
Let $M\geq 2$, $\s_i^M \geq \E |\th_i - \E \th_i|^M$, and define $\|x\|_{\s,M} = \l( \sum_{i=1}^d \frac{|x_i|^M}{d\s_i^M} \r)^{1/M}$. Generate $$Y_i \sim \mathrm{GenNormal}(0, \s_i, M), \hspace{.15in} U = Y / \|Y\|_{\s, M}$$ and draw $r \sim \lap\l((6.16/\n)^{1+2/M}\r)$. Finally, set $X = rU$ and return $\hth = \th + X$. Then $\hth$ is $\n$-MIP.
\end{restatable}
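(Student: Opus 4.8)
The plan is to apply Theorem~\ref{thm: small noise} with the weighted norm $\|\cdot\|_{\s,M}$, whose entire purpose is to absorb the anisotropy of the noise. Conceptually, the diagonal rescaling $T:x\mapsto(x_1/\s_1,\dots,x_d/\s_d)$ turns $\|\cdot\|_{\s,M}$ into a multiple of the standard $\ell^M$ norm and renders the noise isotropic; since $T$ and $T^{-1}$ are deterministic and independent of $\train$, MIP is preserved under them by Theorem~\ref{thm: post processing}, so one may equivalently work directly with $\|\cdot\|_{\s,M}$. (That $\|\cdot\|_{\s,M}$ is a genuine norm follows from Minkowski's inequality, using $M\geq 2$.) Two things must then be checked: (i) the $M$-th central moment of $\th$ with respect to $\|\cdot\|_{\s,M}$ is at most $1$, so that Theorem~\ref{thm: small noise} applies with its parameter $\s=1$; and (ii) the constructed $X$ has density proportional to $\exp(-\tfrac1c\|x\|_{\s,M})$ for the appropriate $c$.

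For (i) I would compute directly
\begin{align*}
\E\|\th - \E\th\|_{\s,M}^M = \frac1d\sum_{i=1}^d \frac{\E|\th_i - \E\th_i|^M}{\s_i^M} \leq 1,
\end{align*}
where the inequality is termwise from the hypothesis $\s_i^M \geq \E|\th_i - \E\th_i|^M$. Thus $\s=1$ is an admissible central-moment bound, and the noise demanded by Theorem~\ref{thm: small noise} is exactly one with density proportional to $\exp(-\tfrac1c\|x\|_{\s,M})$.

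For (ii) the structural observation is that this target density depends on $x$ only through $\|x\|_{\s,M}$, so under the polar decomposition adapted to that norm it factors into an angular (direction) law and a radial law. The angular part is produced exactly by the $\mathrm{GenNormal}$ draw: the joint density of $Y$ is proportional to $\prod_i \exp(-|y_i/\s_i|^M) = \exp(-d\|y\|_{\s,M}^M)$, a function of $\|y\|_{\s,M}$ alone, so $U = Y/\|Y\|_{\s,M}$ follows the cone (surface) measure on the $\|\cdot\|_{\s,M}$-unit sphere -- precisely the angular law of the target. It then remains to match the radial law so that $X = rU$ reproduces the exp-of-norm density, after which $\n$-MIP of $\hth = \th + X$ follows from Theorem~\ref{thm: small noise}.

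I expect the main obstacle to be this radial matching together with pinning down the constant. In one dimension the claim is transparent: $\|\cdot\|_{\s,M}$ reduces to $|\cdot|/\s_1$, $U = \pm\s_1$, and $X = rU$ is a rescaled Laplace with density proportional to $\exp(-\|x\|_{\s,M}/b)$, so a Laplace radius is exactly right. For $d>1$, however, the radial marginal of $\exp(-\lambda\|x\|_{\s,M})$ against Lebesgue measure carries the Jacobian factor $\rho^{d-1}$, so correctly accounting for this factor -- and extracting the sharper constant $6.16$ in place of the generic $\const$ inherited from a black-box use of Theorem~\ref{thm: small noise} -- is the delicate step. I would therefore carry it out by re-running the generalized-Chebyshev/posterior-ratio argument underlying Theorem~\ref{thm: small noise} for this explicit noise, rather than invoking the theorem verbatim.
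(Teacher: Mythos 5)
Your proposal follows the same route as the paper's proof: invoke Theorem~\ref{thm: small noise} with $\|\cdot\|=\|\cdot\|_{\s,M}$, check the moment bound, and check the form of the noise. Your moment computation is identical to the paper's ($\E\|\th-\E\th\|_{\s,M}^M=\sum_{i=1}^d \E\frac{|\th_i-\E\th_i|^M}{d\s_i^M}\leq 1$, termwise from the hypothesis), and your angular argument matches as well: the paper shows the law of $U$ is uniform on the $\|\cdot\|_{\s,M}$-sphere by integrating the GenNormal density along rays, $p(u)\propto\int_0^\infty e^{-s^M d\|u\|_{\s,M}^M}\,ds=\mathrm{const}$, which is exactly your observation that the joint density of $Y$ is $\exp(-d\|y\|_{\s,M}^M)$, a function of the norm alone. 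Your guess about the constant is also accurate: the paper obtains $6.16$ in a short companion corollary by re-opening the proof of Theorem~\ref{thm: small noise} and replacing the inequality $e^x\leq 1+2x$ with $e^x\leq 1+(e-1)x$ (valid in the relevant range once $M\geq 2$), i.e., precisely by not using the theorem as a black box.

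The one place you diverge is the radial Jacobian issue, and there you are more careful than the paper. The paper's proof stops at ``the density depends on $X$ only through $\|X\|$, since $U$ is uniform,'' and then ``directly'' applies Theorem~\ref{thm: small noise}. As you point out, this is a non sequitur for $d>1$: with cone-uniform $U$ and Laplace $r$, the Lebesgue density of $X=rU$ is proportional to $e^{-\|x\|_{\s,M}/c}\,\|x\|_{\s,M}^{-(d-1)}$, not to $e^{-\|x\|_{\s,M}/c}$, so the noise constructed in the corollary is not literally the noise hypothesized in the theorem. The step you flag as delicate is genuinely unresolved in the paper. Your plan---re-running the posterior-ratio/Chebyshev argument of Theorem~\ref{thm: small noise} for the explicit noise $X=rU$---is the honest repair, and it is arguably what the paper implicitly intends: note that the tail bound $\P(\|X\|>t)\leq\frac12 e^{-t/(c\s)}$ used inside the theorem's proof is the tail of an exponentially distributed radius (your construction), not of the exp-of-norm density, whose radius is $\mathrm{Gamma}(d,c\s)$ with a much heavier tail for large $d$; so the theorem's proof and the corollary's proof are each written for the other's noise model. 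An alternative repair, if you want to apply the theorem verbatim rather than re-derive it, is to replace the Laplace radius by a $\mathrm{Gamma}(d,c)$ radius, which reproduces the density $\propto e^{-\|x\|_{\s,M}/c}$ exactly; but then the re-derivation of the theorem's tail bound is needed instead. Either way, your proposal correctly identifies the load-bearing gap, which the paper's own write-up glosses over.
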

\begin{proof}[Proof sketch]
Let $\|\cdot\| = \|\cdot\|_{\s,M}$. It can be shown that the density of $X$ has the proper form.
Furthermore, by definition of the $\s_i$, we have $\E\|\th - \E\th\|^M \leq 1$. The corollary follows directly from Theorem~\ref{thm: small noise}. The improvement in the numerical constant (from 7.5 to 6.16) comes from numerically optimizing some of the bounds in Theorem~\ref{thm: small noise}, and these optimizations are valid for $M\geq 2$.
\end{proof}

\begin{algorithm}[!ht]
\caption{MIP via noise addition}\label{alg: rip via noise}
\begin{algorithmic}
\Require Private dataset $\cD$, $\s$ estimation budget $B$, MIP parameter $\n$
\State $\train \gets \textsc{RandomSplit}(\cD, 1/2)$
\State
\State \emph{\# Estimate $\s$ if an a priori bound is not known}
\For{$i = 1,\ldots,B$}
    \State $\train^{(i)} \gets \textsc{RandomSplit}(\train, 1/2)$
    \State $\th^{(i)} \gets \cA(\train^{(i)})$
\EndFor
\For{$j = 1,\ldots,d$}
\State $\bar{\th}_j \gets \frac1B \sum_{i=1}^B \th^{(i)}_j$
\State $\s_j \gets \l(\frac{1}{B} \sum_{i=1}^B (\th^{(i)}_j - \bar{\th}_j)^M\r)^{1/M}$
\EndFor
\State
\State \emph{\# Add appropriate noise to the base algorithm's output}
\State $U \gets \mathrm{Unif}(\{u \in \R^d \: : \: \|u\|_{\s, M} = 1\})$
\State $r \gets \lap\l(\l(\frac{6.16}{\n}\r)^{1+2/M}\r)$
\State $X \gets rU$
\State \Return $\cA(\train) + X$
\end{algorithmic}
\end{algorithm}

In practice, the $\s_i$ may not be known, but they can easily be estimated from data. We implement this intuition and devise a practical method for guaranteeing MIP in Algorithm~\ref{alg: rip via noise}. We remark briefly that the estimator for $\s_j$ used in Algorithm~\ref{alg: rip via noise} is not unbiased, but it is consistent (i.e., the bias approaches $0$ as $B\rightarrow\infty$). When $M=2$, there is a well-known unbiased estimate for the variance which replace $1/B$ with $1/(B-1)$, and one can make similar corrections for general $M$ \citep{gerlovina2019moments}. In practice, these corrections yield very small difference and the naive estimator presented in the algorithm should suffice.

\paragraph{When Does MIP Improve Over DP?}
By Theorem~\ref{thm: dp implies rip}, any DP algorithm gives rise to a MIP algorithm, so we \emph{never} need to add more noise than the amount required to guarantee DP, in order to guarantee MIP. However, Theorem~\ref{thm: small noise} shows that MIP affords an advantage over DP when the variance of our algorithm's output (over subsets of size $n/2$) is much smaller than its sensitivity $\Delta$, which is defined as the maximum change in the algorithm's output when evaluated on two datasets which differ in only one element. For instance, applying the Laplace mechanism from DP requires a noise which scales like $\Delta/\epsilon$ to guarantee $\e$-DP. It is easy to construct examples where the variance is much smaller than the sensitivity if the output of our ``algorithm'' is allowed to be completely arbitrary as a function of the input. However, it is more interesting to ask if there are any \emph{natural} settings in which this occurs. Proposition~\ref{thm: rip benefit} answers this question in the affirmative.
\begin{restatable}{prop}{ripbenefit} \label{thm: rip benefit}
For any finite $D \subseteq \R$, define $\cA(D) = \frac{1}{\sum_{x\in D} x}$. Given a dataset $\cD$ of size $n$, define $\D = \{D \subseteq \cD \: : \: |D| = \lfloor n/2 \rfloor\},$ and define $$ \s^2 = \var(\cA(D)), \hspace{.25in} \Delta = \max_{D \sim D' \in \D} |\cA(D) - \cA(D')|. $$ Here the variance is taken over $D \sim \mathrm{Unif}(\D)$. Then for all $n$, there exists a dataset $|\cD| = n$ such that $\s^2 = O(1)$ but $\Delta = \Omega(2^{n/3})$.
\end{restatable}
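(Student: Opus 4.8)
The plan is to exhibit a single dataset whose half-subset sums are all distinct and well separated, except for one subset engineered to have a sum that is exponentially close to zero. Writing $S(D) = \sum_{x\in D} x$ so that $\cA(D) = 1/S(D)$, the reciprocal then blows up on that one subset, forcing $\Delta$ to be large, while the separation of all the \emph{other} sums keeps the second moment $\E[\cA(D)^2]$ — and hence the variance $\s^2$ — negligible after averaging over the roughly $2^n$ subsets.

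First I would start from the classical distinct-subset-sum set $w_i = 2^{i-1}$, $i=1,\ldots,n$, for which every subset sum $W(D) = \sum_{i\in D} w_i$ is distinct. Fix any half-subset $D^*$ with $|D^*| = \lfloor n/2\rfloor =: m$, set $W^* = W(D^*)$, and shift every element by the same constant: $x_i = 2^{i-1} - (W^* - 2^{-n/3})/m$. The crucial observation is that since \emph{every} $D\in\D$ has exactly $m$ elements, this shift moves all half-subset sums by the identical amount, so $S(D) = W(D) - W^* + 2^{-n/3} = k(D) + 2^{-n/3}$, where $k(D) := W(D) - W^* \in \mathbb{Z}$ are distinct integers with $k(D^*)=0$. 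Hence $S(D^*) = 2^{-n/3}$, while every other subset satisfies $|S(D)| = |k(D) + 2^{-n/3}| \geq 1 - 2^{-n/3} \geq 1/2$.

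Both bounds then follow quickly. For $\Delta$: any $D'$ adjacent to $D^*$ (obtained by a single swap) has $D'\neq D^*$, hence $|S(D')|\geq 1/2$ and $|\cA(D')|\leq 2$, so $\Delta \geq |\cA(D^*) - \cA(D')| \geq 2^{n/3} - 2 = \Omega(2^{n/3})$. For the variance I would bound $\s^2 \leq \E[\cA(D)^2] = |\D|^{-1}\sum_{D} 1/S(D)^2$. The single bad subset contributes $1/(2^{-n/3})^2 = 2^{2n/3}$; for the rest I use $|k(D)+2^{-n/3}| \geq |k(D)|/2$ together with the distinctness of the integers $k(D)$ to compare $\sum_{D\neq D^*} 1/S(D)^2 \leq 8\sum_{j\geq 1} j^{-2} = 4\pi^2/3 = O(1)$. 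Dividing by $|\D| = \binom{n}{\lfloor n/2\rfloor} \geq 2^n/(n+1)$ yields $\s^2 \leq (n+1)(2^{2n/3} + O(1))/2^n \to 0$, which is in particular $O(1)$.

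The main obstacle — and the one genuinely non-obvious idea — is reconciling the two competing requirements: one sum must be exponentially small while \emph{all} others stay bounded away from zero, so that the rare huge value of $\cA$ inflates $\Delta$ but not the averaged second moment. The resolution rests on two points: (i) the convergence of $\sum_j j^{-2}$, which makes the reciprocal-square sum over distinct-integer sums $O(1)$ rather than scaling with $|\D|$; and (ii) the fact that subtracting a common constant from each element shifts all equal-cardinality subset sums identically, which lets me plant one sum exactly at $2^{-n/3}$ without disturbing the unit gaps between the others. The exponent $1/3$ is chosen with room to spare, since $2^{2n/3} = o(2^n)$ ensures the bad term vanishes after averaging.
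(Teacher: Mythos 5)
Your proof is correct, and it shares the paper's core architecture: plant a single half-subset whose sum is exponentially small (so $\cA$ blows up there, forcing $\Delta$ large), keep every other subset sum bounded away from zero via the distinctness of power-of-2 subset sums, and then note that the planted subset has probability $\approx \binom{n}{\lfloor n/2\rfloor}^{-1} \approx 2^{-n}$, so its huge value washes out of the second moment. Where you differ is the planting mechanism. The paper appends one special element $A = \sqrt{p} - \sum_{i=0}^{n/2-2} 2^i$ (with $p = \binom{n}{n/2}^{-1}$) to the powers of $2$, so the bad subset sums to exactly $\sqrt{p}$; this calibration makes the bad subset's second-moment contribution exactly $p \cdot p^{-1} = 1$ and yields the stronger blow-up $\Delta = \Omega\bigl(\binom{n}{n/2}^{1/2}\bigr) \approx 2^{n/2}$, while all other outputs lie in $[0,1]$. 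You instead shift \emph{all} elements by a common constant, exploiting the fact that every $D \in \D$ has the same cardinality $m$, so all sums move identically and you can place $S(D^*) = 2^{-n/3}$ without disturbing the integer gaps between the rest. Your version buys cleaner handling of odd $n$ and a transparent lower bound $|S(D)| \geq 1 - 2^{-n/3}$ for $D \neq D^*$; the cost is a weaker (but still sufficient) $\Delta = \Omega(2^{n/3})$, compensated by slack in the variance bound since $2^{2n/3} \ll 2^n$. Two small remarks: your Basel-series step is unnecessary --- the crude bound $|\cA(D)| \leq 2$ for $D \neq D^*$ already gives $\sum_{D \neq D^*} S(D)^{-2} \leq 4|\D|$, hence a second moment of $O(1)$ after dividing by $|\D|$ --- and your argument implicitly needs $n \geq 3$ or so (e.g.\ for $2^{-n/3} \leq 1/2$ and for adjacent subsets to exist), which is harmless since small $n$ can be absorbed into the $O(\cdot)$ and $\Omega(\cdot)$ constants.
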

\begin{proof}[Proof sketch]
Assume $n$ is even for simplicity. Let $p = \binom{n}{n/2}^{-1}$ and $A = \sqrt{p} - \sum_{i=0}^{\frac{n}{2} - 2} 2^i$. Take $$\cD = \{2^i \: : \: i=0,\ldots,n-2\} \cup \{A\}.$$ When $D = \{2^0, \ldots, 2^{\frac{n}{2}-2}, A\}$, then $\cA(D) = p^{-1/2}$, and this occurs with probability $p$. For all other subsets $D'$, $0 \leq \cA(D') \leq 1$.
\end{proof}

\section{Simulation Results}

\subsection{Noise Level in Proposition~\ref{thm: rip benefit}}
To illustrate our theoretical results, we plot the noise level needed to guarantee MIP vs. the corresponding level of DP (with the correspondence given by Theorem~\ref{thm: dp implies rip}) for the example in Proposition~\ref{thm: rip benefit}.

Refer to Fig.~\ref{fig:prop9_result}. Dotted lines refer to DP, while the solid line is for MIP with $M=2$. The $x$-axis gives the best possible bound on the attacker's improvement in accuracy over random guessing--i.e., the parameter $\n$ for an $\n$-MIP method--according to that method's guarantees. For DP, the value along the $x$-axis is given by the (tight) correspondence in Theorem~\ref{thm: dp implies rip}, namely $\n = \frac{1}{1+e^{-\e}} - \frac12$. $\n=0$ corresponds to perfect privacy (the attacker cannot do any better than random guessing), while $\n=\frac12$ corresponds to no privacy (the attacker can determine membership with perfect accuracy). The $y$-axis denotes the amount of noise that must be added to the non-private algorithm's output, as measured by the scale parameter of the Laplace noise that must be added (lower is better). For MIP, by Theorem~\ref{thm: small noise}, this is $(6.16/\n)^2 \s$ where $\s$ is an upper bound on the variance of the base algorithm over random subsets, and for DP this is $\frac{\Delta}{\log\frac{1+2\n}{1-2\n}}$. (This comes from solving $\n = \frac{1}{1+e^{-\e}} - \frac12$ for $\e$, then using the fact that $\lap(\Delta/\e)$ noise must be added to guarantee $\e$-DP.) For DP, the amount of noise necessary changes with the size $n$ of the private dataset. For MIP, the amount of noise does not change, so there is only one line.

The results show that for even small datasets ($n \geq 36$) and for $\n \geq 0.01$, direct noise accounting for MIP gives a large advantage over guaranteeing MIP via DP. In practice, such small datasets are uncommon. As $n$ increases above even this modest range, the advantage in terms of noise reduction for MIP vs. DP quickly becomes many orders of magnitude and is not visible on the plot. (Refer to Proposition~\ref{thm: rip benefit}. The noise required for DP grows exponentially in $n$, while it remains constant in $n$ for MIP.)

\begin{figure}[h!]
\begin{center}
\includegraphics[width=0.8\linewidth]{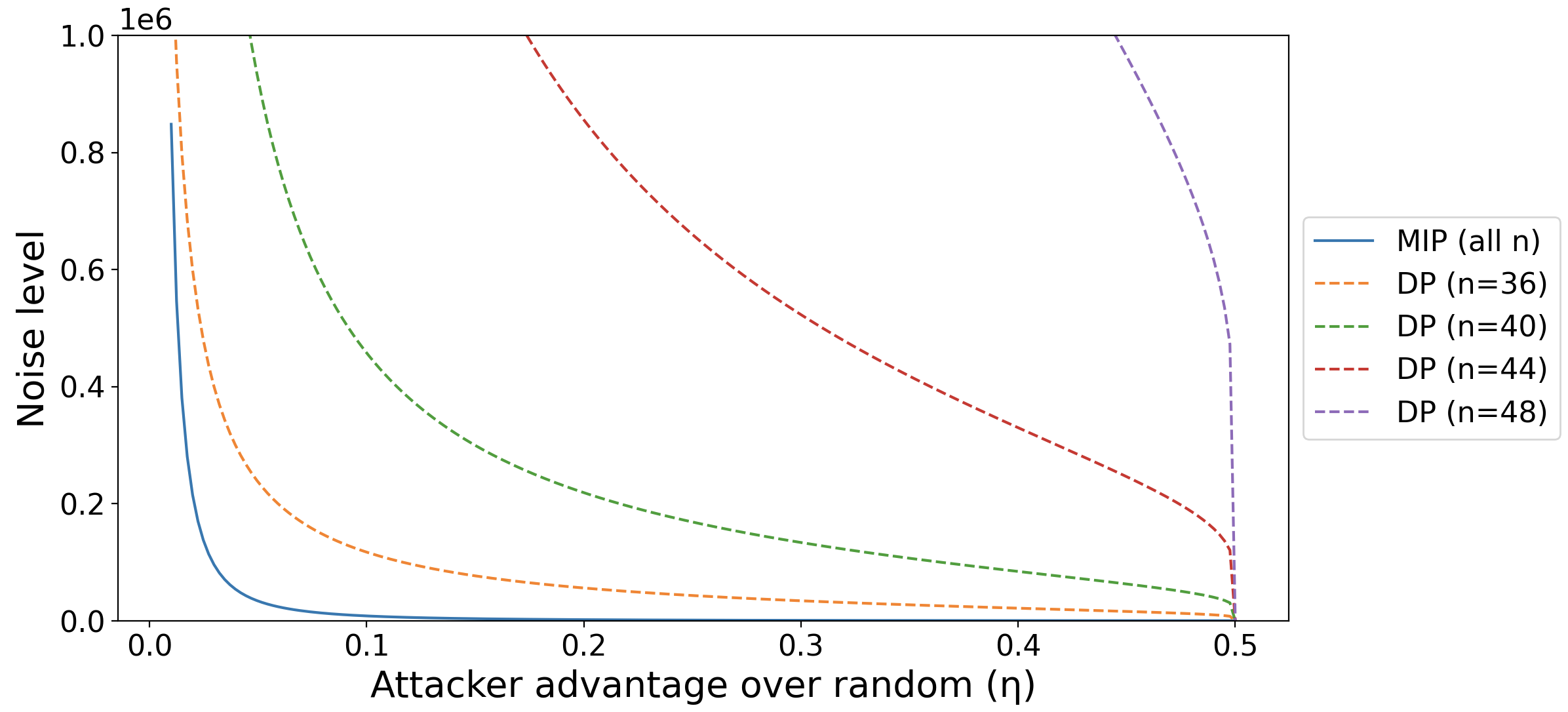}
\end{center}
\caption{Noise level vs. privacy guarantee for MIP and DP (lower is better). For datasets with at least $n=36$ points and for almost all values of $\n$, MIP allows us to add much less noise than what would be required by naively applying DP.
For $n > 48$, the amount of noise required by DP is so large that it will not appear on the plot.
}
\label{fig:prop9_result}
\end{figure}

\subsection{Synthetic Data Generation}
We conduct an experiment as a representation scenario for private synthetic data generation. We are given a dataset which consists of i.i.d. draws from a private ground truth distribution $P$. Our goal is to learn a generative model $G$ which allows us to (approximately) sample from $P$. That is, given a latent random variable $z$ (we will take $z\sim \cN(0, I)$), we have $G(z) \sim P$. If $G$ is itself private and a good approximation for $P$, then by the post-processing theorem (Theorem~\ref{thm: post processing}), we can use $G$ to generate synthetic draws from $P$, without violating the privacy of any sample in the training data.

For this experiment, we will take $G$ to be a single-layer linear network with no bias, i.e. $G(z) = Wz$ for some weight matrix $W$. The ground truth distribution $P = \cN(0, \Sigma)$ is a mean-0 normal distribution with (unknown) non-identity covariance $\Sigma$. In this setting, $W = \Sigma^{1/2}$ would exactly reproduce $P$.


Let $\{\bx_i\}_{i=1}^n \subseteq \R^d$ be the training data. Rather than attempting to learn $W \approx \Sigma^{1/2}$ directly, we will instead try to learn $A \approx \Sigma$. We can then set $W = (A+A^\T)^{1/2}$. If $A\approx \Sigma$, then we will have $W\approx \Sigma^{1/2}$ and $G(z) \approx P$. We learn $A$ by minimizing the objective $$ \min_A \: \l\|A - \frac1n \sum_{i=1}^n \bx_i \bx_i^\T\r\|_F^2 $$ via gradient descent.

For the DP method, we implement DP-SGD \citep{abadi2016deepdp} with a full batch. We chose the clipping parameter $C$ according to the authors' recommendations, i.e. the median of the unclipped gradients over the training procedure. To implement MIP, we used Corollary~\ref{thm: non iso noise} with $M \in \{2, 4, 6\}$ and the corresponding $\s_i$'s computed empirically over 128 random train/holdout splits of the base dataset. The results below use $d=3$ and $n=500,000$.

Refer to Figure~\ref{fig:syntheticGM}. The $x$-axes show the theoretical privacy level $\n$ (again using the tight correspondence between $\e$ and $\n$ from Theorem~\ref{thm: dp implies rip}), and the $y$-axes show the relative error $\|A-\Sigma\|_F/\|\Sigma\|_F$ (lower is better). The three plots show the same results zoomed in on different ranges for $\n$ to see greater granularity, and the shaded region shows the standard error of the mean over 10 runs. For $\n \geq 0.1$, MIP with any of the tested values of $M$ outperforms DP in terms of accuracy. For the entire tested $\n$ range (the smallest of which was $\n=0.01$), MIP with $M=4$ or $6$ outperforms DP. (The first plot does not show MIP with $M=2$ because its error is very large when $\n < 0.1$.) Finally, observe that DP is \emph{never} able to obtain relative error less than $1$ (meaning the resulting output is mostly noise), while MIP obtains relative error less than $1$ for $\n \approx 0.2$ and larger.

\begin{figure}[h!]
\begin{center}
\includegraphics[width=\linewidth]{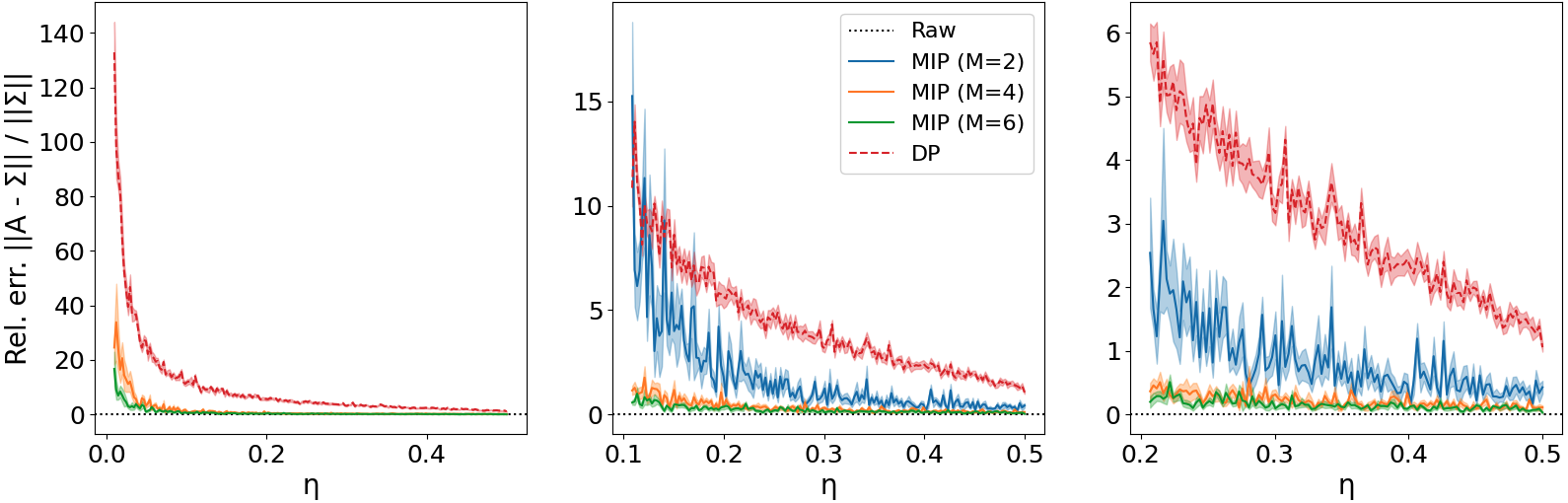}
\end{center}
\caption{Error vs. privacy guarantee for MIP and DP (lower is better). Raw refers to the error of the non-private base algorithm, which just computes $A$ by vanilla gradient descent. MIP improves over DP in terms of accuracy when $\n\geq0.1$ for all of the tested values of $M\in\{2,4,6\}$, and for $M\in\{4,6\}$, MIP improves of DP for the entire range of $\n$. Note that MIP obtains relative error $<1$ for some $\n$, while DP always has relative error larger than $1$. MIP with $M=2$ is not shown in the first plot because the error is large in the small $\n$ range, obscuring the results for the other methods.}
\label{fig:syntheticGM}
\end{figure}

\section{Discussion}
In this work, we proposed a novel privacy property, membership inference privacy (MIP) and explained its properties and relationship with differential privacy (DP). The MIP property is more readily interpretable than the guarantees offered by (DP). MIP also requires a smaller amount of noise to guarantee as compared to DP, and therefore can retain greater utility in practice. We proposed a simple ``wrapper'' method for guaranteeing MIP, which can be implemented with a minor modification both to simple statistical queries or more complicated tasks such as the training procedure for parametric machine learning models.

\paragraph{Limitations}
As the example used to prove Theorem~\ref{thm: rip weaker than dp} shows, there are cases where apparently non-private algorithms can satisfy MIP. Thus, algorithms which satisfy MIP may require post-processing to ensure that the output is not one of the low-probability events in which data privacy is leaked.
In addition, because MIP is determined with respect to a holdout set still drawn from $\cD$, an adversary may be able to determine with high probability whether or not a given sample was contained in $\cD$, rather than just in $\train$, if $\cD$ is sufficiently different from the rest of the population.

\paragraph{Future Work}
Theorem~\ref{thm: dp implies rip} suggests that DP implies MIP in general. However, Theorem~\ref{thm: small noise} shows that a finer-grained analysis of a standard DP mechanism (the Laplace mechanism) is possible, showing that we can guarantee MIP with less noise. It seems plausible that a similar analysis can be undertaken for other DP mechanisms.
In addition to these ``wrapper'' type methods which can be applied on top of existing algorithms, bespoke algorithms for guaranteeing MIP in particular applications (such as synthetic data generation) are also of interest. Noise addition is a simple and effective way to enforce privacy, but other classes of mechanisms may also be possible. For instance, is it possible to directly regularize a probabilistic model using Proposition~\ref{thm: rip equiv}? Finally, the connections between MIP and other theoretical notions of privacy (Renyi DP \citep{mironov2017renyi}, concentrated DP \citep{dwork2016concentrated}, etc.) are also of interest.
Lastly, this paper focused on developing on the theoretical principles and guarantees of MIP, but systematic empirical evaluation is an important direction for future work. 

\bibliographystyle{plainnat}
\bibliography{arxiv}

\newpage
\onecolumn
\appendix

\section{Deferred Proofs} \label{sec: proofs}
For the reader's convenience, we restate all lemmas, theorems, etc. here.
\ripequiv*
\begin{proof}
We will show that the membership inference algorithm $\cI(\bx^*, A) = \I\{\P(\bx^* \in \train \: | \: \cA(\train) = A) \geq 1/2\}$ is optimal, then compute the resulting probability of membership inference. We have
\begin{align}
    \P(\cI(\bx^*, \cA(\train)) = y^*) &= \sum_{\train \subseteq \cD} \binom{n}{k}^{-1} \sum_{A\in\A} \P(\cA(\train) = A) \cdot \P(\cI(\bx^*, A) = \I\{\bx^* \in \train\}) \nn \\[5pt]
    &= \binom{n}{k}^{-1} \sum_{A\in\A} \Bigg[ \sum_{D\in \bDin} \P(\cA(D) = A) \cdot \P(\cI(\bx^*, A) = 1) \nn \\
    &\hphantom{=\binom{n}{k}^{-1} \sum_{A\in\A} \Bigg[ } + \sum_{D\in \bDout} \P(\cA(D) = A)\cdot (1-\P(\cI(\bx^*, A) = 1)) \Bigg] \nn \\[5pt]
    &=\binom{n}{k}^{-1} \sum_{A\in\A}\Bigg[ \l(\sum_{D\in\bDin} \P(\cA(D) = A) - \sum_{D\in\bDout}\P(\cA(D) = A)\r) \P(\cI(\bx^*, A) = 1) \nn \\
    &\hphantom{=\binom{n}{k}^{-1} \sum_{A\in\A} \Bigg[} + \sum_{D\in\bDout}\P(\cA(D) = A) \Bigg]. \nn
\end{align}
The choice of algorithm $\cI$ just specifies the value of $\P(\cI(\bx^*, A)=1)$ for each sample $\bx^*$ and each $A \in \A$. We see that the maximum membership inference probability is obtained when
\begin{equation} \label{eq: ripequiv 1}
\P(\cI(\bx^*, A) = 1) = \I\l\{ \sum_{D\in\bDin} \P(\cA(D) = A) - \sum_{D\in\bDout}\P(\cA(D) = A) \geq 0 \r\},
\end{equation}
which implies that
\begin{equation} \label{eq: ripequiv 2}
\sum_{D\in\bDin} \P(\cA(D) = A) - \sum_{D\in\bDout}\P(\cA(D) = A) \leq \binom{n}{k}^{-1} \sum_{A\in\A} \max\l\{ \sum_{D\in\bDin} \P(\cA(D) = A),  \sum_{D\in\bDout}\P(\cA(D) = A)\r\}. 
\end{equation} 
To conclude, observe that
\begin{equation} \label{eq: ripequiv 3}
\P(\bx^* \in D \: | \: \cA(D) = A) = \frac{\P(x\in D \wedge \cA(D) = A)}{\P(\cA(D) = A)}  = \frac{\sum_{D\in\bDin} \binom{n}{k}^{-1}\P_\cA(\cA(D) = A)}{\P_{\cA, D}(\cA(D) = A)}.
\end{equation}
The result follows by rearranging the expression \eqref{eq: ripequiv 3} and plugging it into \eqref{eq: ripequiv 1} and \eqref{eq: ripequiv 2}.
\end{proof}

In what follows, we will assume without loss of generality that $k \geq n/2$. The proofs in the case $k < n/2$ are almost identical and can be obtained by simply swapping $\bDin \leftrightarrow \bDout$ and $k \leftrightarrow n-k$.

\begin{lem} \label{thm: inj}
Fix $\bx^* \in \cD$ and let $\bDin = \{D \in \cD \: | \: \bx^* \in D\}$ and $\bDout = \{D \in \cD \: | \: \bx^* \not\in D\}$. If $k \geq n/2$ then there is an injective function $f: \bDout \rightarrow \bDin$ such that $D \sim f(D)$ for all $D \in \bDout$.
\end{lem}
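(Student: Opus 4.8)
The plan is to phrase this as the existence of a matching in a bipartite graph and invoke Hall's marriage theorem. Form the bipartite graph $G$ whose left vertex set is $\bDout$ and whose right vertex set is $\bDin$, placing an edge between $D \in \bDout$ and $D' \in \bDin$ precisely when $D \sim D'$. Since every set in $\D$ has size $k$, while $\bx^* \not\in D$ and $\bx^* \in D'$, the adjacency $D \sim D'$ can occur only when $D'$ arises from $D$ by deleting some $y \in D$ and inserting $\bx^*$; that is, $D' = (D \setminus \{y\}) \cup \{\bx^*\}$. An edge therefore corresponds to a single swap of one element for $\bx^*$. A matching in $G$ that saturates $\bDout$ is exactly an injective $f : \bDout \to \bDin$ with $D \sim f(D)$ for all $D$, so it suffices to produce such a matching.

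First I would compute the vertex degrees and observe that $G$ is biregular. Each $D \in \bDout$ has exactly $k$ neighbors, one for each choice of the element $y \in D$ to swap out in favor of $\bx^*$. Each $D' \in \bDin$ has exactly $n-k$ neighbors: the only deletion that removes $\bx^*$ from $D'$ is deleting $\bx^*$ itself, after which any $z \in \cD \setminus D'$ may be inserted, and $|\cD \setminus D'| = n-k$. This is consistent with the edge count $|\bDout|\,k = \binom{n-1}{k}k = \binom{n-1}{k-1}(n-k) = |\bDin|\,(n-k)$.

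Then I would verify Hall's condition on the left side by double counting. For any $S \subseteq \bDout$, the number of edges incident to $S$ equals $|S|\,k$ (counting from the left), and every such edge lands on a vertex of $N(S) \subseteq \bDin$, each of which absorbs at most $n-k$ edges (its total degree). Hence $|S|\,k \le |N(S)|\,(n-k)$, i.e. $|N(S)| \ge \frac{k}{n-k}\,|S|$. The hypothesis $k \ge n/2$ gives $k \ge n-k$, so $\frac{k}{n-k} \ge 1$ and thus $|N(S)| \ge |S|$. Hall's theorem then yields a matching saturating $\bDout$, which is the desired injection $f$.

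The main obstacle is really the adjacency bookkeeping: pinning down that the only way for a size-$k$ set containing $\bx^*$ to be adjacent to a size-$k$ set omitting $\bx^*$ is the single $\bx^* \leftrightarrow y$ swap, and correctly reading off the two degrees $k$ and $n-k$. Once $G$ is seen to be biregular with left degree $k$ at least the right degree $n-k$, Hall's condition follows immediately from the double counting, and the role of the assumption $k \ge n/2$ is precisely to make $\frac{k}{n-k} \ge 1$.
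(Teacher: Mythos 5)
Your proof is correct and follows essentially the same route as the paper's: the same bipartite graph on $\bDout$ and $\bDin$, the same observation that it is $(k,n-k)$-biregular, the same double-counting to verify Hall's condition $|N(S)| \geq \frac{k}{n-k}|S| \geq |S|$, and Hall's marriage theorem to extract the matching. The only difference is cosmetic: the paper writes the degree-counting step as a sum of ratios rather than a direct edge-count inequality.
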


\begin{proof}
We define a bipartite graph $G$ on nodes $\bDin$ and $\bDout$. There is an edge between $\Din \in \bDin$ and $\Dout \in \bDout$ if $\Dout$ can be obtained from $\Din$ by removing $\bx^*$ from $\Din$ and replacing it with another element, i.e. if $\Din \sim \Dout$. To prove the lemma, it suffices to show that there is a matching on $G$ which covers $\Dout$. We will show this via Hall's marriage theorem. \\

First, observe that $G$ is a $(k, n-k)$-biregular graph. Each $\Din \in \bDin$ has $n-k$ neighbors which are obtained from $\Din$ by selecting which of the remaining $n-k$ elements to replace $\bx^*$ with; each $\Dout \in \bDout$ has $k$ neighbors which are obtained by selecting which of the $k$ elements in $\Dout$ to replace with $\bx^*$. \\

Let $W \subseteq \bDout$ and let $N(W) \subseteq \bDin$ denote the neighborhood of $W$. We have the following:
\begin{align}
    |N(W)| &= \sum_{\Din \in N(W)} \frac{\sum_{\Dout \in W} \I\{\Dout \sim \Din\}}{\sum_{\Dout \in W} \I\{\Dout \sim \Din\}} \nn \\[5pt]
    &\geq \sum_{\Din \in N(W)} \frac{\sum_{\Dout \in W} \I\{\Dout \sim \Din\}}{\sum_{\Dout \in \bDout} \I\{\Dout \sim \Din\}} \nn \\[5pt]
    &= \frac{1}{n-k} \sum_{\Dout \in W} \sum_{\Din \in N(W)} \I\{\Dout \sim \Din\} \label{eq: hall 1} \\[5pt]
    &= \frac{k}{n-k} |W|. \label{eq: hall 2}
\end{align}
Equation \eqref{eq: hall 1} holds since each $\Din$ has degree $n-k$ and by exchanging the order of summation. Similarly, \eqref{eq: hall 2} holds since each $\Dout$ has degree $k$. When $k \geq n/2$, we thus have $|N(W)| \geq |W|$ for every $W \subseteq \bDout$ and the result follows by Hall's marriage theorem.
\end{proof}

\dptorip*

\begin{proof}
Let $f: \bDout \rightarrow \bDin$ denote the injection guaranteed by Lemma~\ref{thm: inj}. We have
\begin{align}
    \P(&\cI(\bx^*, \cA(D)) = y^*) = \frac{1}{\binom{n}{k}} \left[ \sum_{\Din \in \bDin} \P(\cI(\bx^*, \cA(\Din)) = 1) + \sum_{\Dout \in \bDout} \P(\cI(\bx^*, \cA(\Dout)) = 0) \right] \nn \\[5pt]
    &\leq \frac{1}{\binom{n}{k}} \left[ \sum_{\Din \in \bDin} \P(\cI(\bx^*, \cA(\Din)) = 1) + \sum_{\Dout \in \bDout} \l( e^\e \P(\cI(\bx^*, \cA(f(\Dout))) = 0) + \d \r)\right] \nn \\[5pt]
    &\leq \frac{1}{\binom{n}{k}} \left[ \sum_{\Din \in \bDin} \l( \P(\cI(\bx^*, \cA(\Din)) = 1) + e^\e \P(\cI(\bx^*, \cA(\Din)) = 0)\r) + \d\binom{n-1}{k} \right] \label{eq: dp2rip 1} \\[5pt]
    &\leq \frac{1}{\binom{n}{k}} \left[ e^\e \sum_{\Din \in \bDin} \l( \P(\cI(\bx^*, \cA(\Din)) = 1) + \P(\cI(\bx^*, \cA(\Din)) = 0)\r) + \d\binom{n-1}{k} \right] \nn \\[5pt]
    &= \frac{1}{\binom{n}{k}} \l[ e^\e \binom{n-1}{k-1} + \d\binom{n-1}{k}\r] = e^\e \frac{k}{n} + \d \frac{n-k}{n}. \nn
\end{align}
Here inequality \eqref{eq: dp2rip 1} critically uses the fact that $f$ is injective, so at most one term from the sum over $\bDout$ is added to each term in the sum over $\bDin$. This completes the proof.
\end{proof}

\riptodp*

\begin{proof}
Let $\cA$ be defined as follows. Given a training set $D$, $\cA(D)$ outputs a random subset of $D$ where each element is included independently and with probability $p$. It is obvious that such an algorithm is not $(\e, 0)$-DP for any $\e < \infty$: if $\bx \in D$, then $\cA(D) \in \{\{\bx\}\}$ with positive probability. But if we replace $\bx$ with $\bx' \neq \bx$ and call this adjacent dataset $D'$ (so that $\bx \not \in D'$, then $\cA(D') \in \{\{\bx\}\}$ with probability 0. Thus $\cA$ is not differentially private for any $p > 0$.

We now claim that $\cA$ is $(\e, 0)$-MIP for any $\e>0$, provided that $p$ is small enough. To see this, observe the following. For any identification algorithm $\cI$,
\begin{align}
    \P(&\cI(\bx^*, \cA(D)) = y^*) = \sum_{\cA(D)} \bigg[ \P(\bx^* \in D) \cdot \P(\cA(D) \: | \: \bx^* \in D) \cdot \P(\cI(\bx^*, \cA(D)) = 1) \nn \\
    &\hspace{1in}+ \P(\bx^* \not\in D) \cdot \P(\cA(D) \: | \: \bx^* \not\in D) \cdot (1-\P(\cI(\bx^*, \cA(D)) = 1)) \bigg] \nn \\[5pt]
    &= \sum_{\cA(D)} \bigg[ \l(\frac{k}{n} \P(\cA(D) \: | \: \bx^* \in D) + (1-\frac{k}{n}) \P(\cA(D) \: | \: \bx^* \not\in D) \r)\P(\cI(\bx^*, \cA(D)) = 1) \nn \\
    &+ (1-\frac{k}{n})\P(\cA(D) \: | \: \bx^* \not\in D) \bigg] \nn \\[5pt]
    &\leq \sum_{\cA(D)} \max\l\{ \frac{k}{n} \P(\cA(D) \: | \: \bx^*\in D), (1-\frac{k}{n}) \P(\cA(D) \: | \: \bx^* \not\in D)\r\} \nn \\[5pt]
    &\leq (1-p)^k \max\l\{\frac{k}{n}, 1-\frac{k}{n}\r\} + \sum_{\cA(D) \neq \emptyset} (1 - (1-p)^k) \max\l\{\frac{k}{n}, 1-\frac{k}{n}\r\} \label{eq: no dp 1} \\[5pt]
    &= \max\l\{\frac{k}{n}, 1-\frac{k}{n}\r\} \l[ (1-p)^k + C_{n,k} (1-(1-p)^k)\r] \label{eq: no dp 2}.
\end{align}
Inequality \eqref{eq: no dp 1} holds because $\cA(D) = \emptyset$ with probability $(1-p)^k$ regardless of whether of not $\bx^* \in D$, and therefore the probability that $\cA(D) \neq \emptyset$ is at most $1-(1-p)^k$ (again regardless of $\bx^* \in D$ or not) for any $\cA(D) \neq \emptyset$. The constant $C_{n,k}$ simply counts the number of possible $\cA(D) \neq \emptyset$, which depends only on $n$ and $k$ but not $p$. Thus as $p\rightarrow 0$, $\eqref{eq: no dp 2} \rightarrow 1$. This completes the proof.
\end{proof}
The proof of Theorem~\ref{thm: rip weaker than dp} emphasizes that the membership inference privacy guarantee is \emph{marginal} over the ouput of $\cA$. Conditional on a particular output, an adversary may be able to determine whether or not $\bx^* \in D$ with arbitrarily high precision. This is in contrast with the result of Proposition~\ref{thm: what is dp}, which shows that even \emph{conditionally} on a particular output of a DP algorithm, the adversary cannot gain too much.

\whatisdp*
\begin{proof}
Using expression \eqref{eq: ripequiv 3} (and the corresponding expression for $\bx^*\not\in\train$), we have
\begin{align*}
    \frac{\P(\bx^* \not\in\train \: | \: \cA(\train)=A)}{\P(\bx^* \in \train \: | \: \cA(\train))} &= \frac{\sum_{D\in\bDout} \P_\cA(\cA(D) = A)}{\sum_{D\in\bDin}\P_\cA(\cA(D) = A)} \\[5pt]
    &\leq \frac{e^\e \sum_{D\in\bDout} \min_{D'\in \bDin, D'\sim D} \P_\cA(\cA(D)=A)}{\sum_{D\in\bDin} \P_\cA(\cA(D)=A)}.
\end{align*}
We now analyze this latter expression. We refer again to the biregular graph $G$ defined in Lemma~\ref{thm: inj}. For $D \in \bDout$, $N(D) \subseteq \bDin$ refers to the neighbors of $D$ in $G$, and recall that $|N(D)|=k$ for all $D \in \bDout$. Note that since each $D'\in \bDin$ has $n-k$ neighbors, we have
$$ \sum_{D\in\bDout} \sum_{D'\in N(D)} \P(\cA(D')=A) = (n-k)\sum_{D'\in\bDin} \P(\cA(D')=A). $$
Using this equality, we have
\begin{align*}
    \frac{\sum_{D\in\bDout} \min_{D'\in \bDin, D'\sim D} \P_\cA(\cA(D)=A)}{\sum_{D\in\bDin} \P_\cA(\cA(D)=A)} &= \frac{\sum_{D\in\bDout} \min_{D' \in N(D)} \P_\cA(\cA(D)=A)}{\frac{1}{n-k}\sum_{D\in\bDout}\underbrace{\sum_{D'\in N(D)} \P_\cA(\cA(D)=A)}_{\geq k \min_{D'\in N(D)} P_\cA(\cA(D)=A)}} \\[5pt]
    &\leq \frac{n-k}{k}.
\end{align*}
Since $\P(\bx^* \not\in \train) = \binom{n-1}{k}/\binom{n}{k}$ and $\P(\bx^* \in \train) = \binom{n-1}{k-1}/\binom{n}{k}$, we have
$ \frac{\P(\bx^* \not\in \train)}{\P(\bx^* \in \train)} = \frac{n-k}{k}. $
This completes the proof.
\end{proof}

\smallnoise*
\begin{proof}
We will assume that $k = n/2$ is an integer. Let $N = |\bDin| = |\bDout|$, and let $\bDin = \{D_1,\ldots,D_N\}$ and $\bDout = \{D'_1,\ldots,D'_N\}$. Define $a_i = \cA(D_i)$ for $D_i \in \bDin$ and $b_j = \cA(D'_j)$ for $D'_j \in \bDout$. Let $Z$ be a random variable which is uniformly distributed on $\{a_i\} \cup \{b_j\}$. We may assume without loss of generality that $\E Z = 0$.
In what follows, $c$, $\a$, $\b$, and $\g$ are constants which we will choose later to optimize our bounds. We also make repeated use of the inequalities $1+x\leq e^x$ for all $x$; $\frac{1}{1+x} \geq 1-x$ for all $x \geq 0$; and $e^x \leq 1 + 2x$ and $(1-x)(1-y) \geq 1-x-y$ for $0\leq x, y \leq 1$. Let $X$ have density proportional to $\exp(-\frac{1}{c\s}\|X\|)$. The posterior likelihood ratio is given by
\begin{equation*} \label{eq: posterior}
    f(\hth) \deq \frac{\P(\train \in \bDin \: | \: \hth)}{\P(\train \in \bDout \: | \: \hth)} = \frac{\sum_{i=1}^N\exp(-\frac{1}{c\s}\|\hth - a_i\|)}{\sum_{j=1}^N\exp(-\frac{1}{c\s}\|\hth - b_j\|)}.
\end{equation*}
We claim that for all $\hth$ with $\|\hth\| \leq \g\s c\log c$, $1-\frac\n2 \leq f(\hth) \leq (1-\frac\n2)^{-1}$. First, suppose that $\|\hth\| \leq c^\a \s$. Then we have:
\begin{align}
    f(\hth) &\geq \frac{\sum_{\|a_i\| \leq c^\a \s} \exp[-\frac{1}{c\s}(\|\hth\| + \|a_i\|)]}{N} \nn \\[10pt]
    &\geq \frac{(1-\frac{2}{c^{M\a}})N \cdot e^{-2c^{\a - 1}}}{N} \nn \\[10pt]
    &\geq 1 - 4c^{-\min(M\a, 1-\a)}. \label{eq: small th bound}
\end{align}

Otherwise, $\|\hth\| \geq c^\a \s$. We now have the following chain of inequalities:
\begin{align}
    f(\hth) &\geq \frac{\sum_{\|a_i\| \leq c^\a \s} e^{-\frac{1}{c\s}(\|\hth\| + \|a_i\|)}}
    {
    \sum_{\|b_j\| \leq c^\a \s} e^{-\frac{1}{c\s}(\|\hth\| - \|b_j\|)} +
    \sum_{c^\a \s < \|b_i\| < \|\hth\|} e^{-\frac{1}{c\s}(\|\hth\| - \|b_j\|)} + 
    \sum_{\|b_i\| \geq \|\hth\|} e^{-\frac{1}{c\s}(\|b_i\| - \|\hth\|)}
    } \nn \\[10pt]
    &= \frac{\sum_{\|a_i\| \leq c^\a \s} e^{-\frac{1}{c\s}\|a_i\|}}
    {
    \sum_{\|b_j\| \leq c^\a \s} e^{\frac{1}{c\s}\|b_j\|} +
    \sum_{c^\a \s < \|b_i\| < \|\hth\|} e^{\frac{1}{c\s}\|b_j\|} + 
    \sum_{\|b_i\| \geq \|\hth\|} e^{\frac{1}{c\s}(2\|\hth\| - \|b_i\|)}
    } \nn \\[10pt]
    &\geq \frac{N(1-\frac{2}{c^{M\a}}) e^{-c^{\a-1}}}
    {
    N\l(e^{c^{\a-1}} + \frac{2}{c^{M\a}} e^{\frac{1}{c\s}\|\hth\|} + \frac{2\s^M}{\|\hth\|^M}e^{\frac{1}{c\s}\|\hth\|}\r)
    } \nn \\[10pt]
    &\geq \frac{(1-\frac{2}{c^M\a})e^{-c^{\a-1}}}{e^{c^{\a-1}} + \frac{2}{c^{M\a}} e^{\g \log c} + \frac{2}{c^{M\a}}e^{\g\log c}} \nn \\[10pt]
    &\geq 1 - 2c^{-M\a} - c^{\a-1} - 2c^{\a-1} - 4c^{\g - M\a} \label{eq: tight 1} \\[10pt]
    &\geq 1 - 9c^{-\min(1-\a, M\a - 2\g)}. \nn
\end{align}
Combining this with \eqref{eq: small th bound} shows that $f(\hth) \geq 1 - 9c^{-\min(1-\a, M\a - \g)}$ for all $\|\hth\| \leq \g\s c\log c$. \\

Next, we must measure the probability of $\|\hth\| \leq \g \s c\log c$. We can lower bound this probability by first conditioning on the value of $\train$:
\begin{align*}
    \P(\|\hth\| \leq \g \s c\log c) &= \frac{1}{|\D|}\sum_{D \in \D} \P(\|\hth\| \leq \g \s c\log c \: | \: \train = D) \\[5pt]
    &\geq \frac{1}{|\D|}\sum_{\|\cA(D)\| \leq c\s} \P(\|X\| \leq \g \s c\log c - \|\cA(D)\|) \\[5pt]
    &\geq \l(1-\frac{1}{c^M}\r) \l( 1 - \frac12 \exp\l(-\frac{\g \s c\log c - c\s}{c\s} \r)\r) \\[5pt]
    &= \l(1-\frac{1}{c^M}\r) \l(1 - \frac{e}{2} c^{-\g}\r) \\[10pt]
    &\geq 1 - c^{-M} - \frac{e}{2}c^{-\g}.
\end{align*}
Note that the exact same logic (reversing the roles of the $a_i$'s and $b_j$'s) shows that $f(\hth) \leq (1-9c^{-\min(1-\a, M\a - 2\g)})^{-1}$ with probability at least $1 - c^{-M} - \frac{e}{2}c^{-\g}$ as well. \\

Finally, we can invoke the result of Proposition~\ref{thm: rip equiv}. Let $\Delta = 9c^{-\min(1-\a, M\a - \g)}$ and note that $$1-\Delta \leq f(\hth) \leq (1-\Delta)^{-1} \hspace{.15in} \Longrightarrow \hspace{.15in} \max \l\{ \P(\bx^* \in \train \: | \: \hth), \P(\bx^* \not\in \train \: | \: \hth) \r\} \leq \frac12 + \frac\Delta2.$$ Thus we have
\begin{align}
    \int &\max \l\{ \P(\bx^* \in \train \: | \: \hth), \P(\bx^* \not\in \train \: | \: \hth) \r\} \, d\P(\hth) \nn \\
    &\leq \l(\frac12 + \frac{\Delta}{2}\r) \P(f(\hth) \in [1-\Delta, (1-\Delta)^{-1}]) + \P(f(\hth) \not\in [1-\Delta, (1-\Delta)^{-1}]) \nn \\[5pt]
    &\leq \frac12 + \frac92 c^{-\min(1-\a, M\a - \g)} + 2c^{-M} + ec^{-\g} \nn \\[5pt]
    &\leq \frac12 + \l(\frac92 + e\r)c^{-\min(1-\a, M\a-\g, \g)} + 2c^{-M} \label{eq: tight 2} \\
    &\leq \frac12 + 7.5 c^{-\frac{M}{M+2}} \nn
\end{align}
where the last inequality follows by setting $\g = 1-\a = M\a - \g$ and solving, yielding $\g = M/(M+2)$. Solving for $\n = 7.5c^{-\frac{M}{M+2}}$, we find that $c = (\frac{7.5}{\n})^{1 + 2/M}$ suffices. This completes the proof.
\end{proof}

\begin{cor}
When $M\geq2$, taking $c=(6.16/\n)^{1+2/M}$ guarantees $\n$-MIP.
\end{cor}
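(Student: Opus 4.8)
The plan is to re-run the endgame of the proof of Theorem~\ref{thm: small noise} while avoiding the constant-inflating inequalities that produced the factor $7.5$. The overall architecture is unchanged: I would bound the posterior likelihood ratio $f(\hth)$ above and below on the ball $\{\|\hth\| \le \g\s c\log c\}$, bound the probability that $\hth$ falls outside this ball, and then invoke Proposition~\ref{thm: rip equiv}. The point is only to sharpen the numerical constants that accumulate along the way, and to exploit that $c$ is large whenever $\n$ is small.

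First I would return to the two places where slack enters. The bound $\Delta = 9c^{-\min(1-\a,\,M\a-\g)}$ feeding \eqref{eq: tight 1} aggregates the four error terms $2c^{-M\a}$, $c^{\a-1}$, $2c^{\a-1}$, $4c^{\g-M\a}$, several of whose coefficients come from the crude step $e^x \le 1+2x$ (valid only on $[0,1]$) together with $(1-x)(1-y)\ge 1-x-y$. Since $c\ge 1$ — indeed $c$ is large in the regime of interest — the arguments of these exponentials are small, so I would replace $e^x\le 1+2x$ by a sharper first-order bound on the relevant range, trimming the factors of two off the offending coefficients. Likewise, the tail contribution $\tfrac{e}{2}c^{-\g}$ and the residual $2c^{-M}$ should be kept as separate terms rather than folded into a single exponent by a worst-case $\min$, which is where further slack was discarded in \eqref{eq: tight 2}.

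Next I would re-solve the balancing conditions. Keeping $\g = 1-\a = M\a-\g$ yields the same optimal exponents $\a = \tfrac{2}{M+2}$ and $\g = \tfrac{M}{M+2}$, so that $c^{-\min(\cdots)} = c^{-M/(M+2)}$ and, after substituting $c=(K/\n)^{1+2/M}$, each leading term contributes a multiple of $\n/K$. Collecting the sharpened coefficients produces an effective leading constant $K(M)$ as an explicit function of $M$, and the claim is that $K=6.16$ drives the total excess below $\n$. The restriction $M\ge 2$ enters precisely here: the sharpened exponential bounds, and the domination of the fast-decaying $c^{-M}$ tail by the leading $c^{-M/(M+2)}$ term, both hold uniformly only once $M\ge 2$.

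The main obstacle is not any single inequality but verifying the bound \emph{uniformly} in $M$. The effective constant $K(M)$ depends on $M$ both through the balanced exponent $M/(M+2)$ and through the sharpened coefficients, so one must confirm $\sup_{M\ge 2}K(M)\le 6.16$ rather than checking a single $M$ — this is exactly the numerical optimization the statement alludes to. I would carry it out by writing $K(M)$ in closed form from the collected terms and checking monotonicity, or equivalently a finite grid together with a tail estimate as $M\to\infty$, where $M/(M+2)\to 1$ and the constant stabilizes.
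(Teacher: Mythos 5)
Your proposal is correct and follows essentially the same route as the paper's own proof: the paper likewise replaces $e^x \leq 1+2x$ with the sharper bound $e^x \leq 1+(e-1)x$ on $[0,1]$, keeps the tail contributions $e c^{-\g}$ and $2c^{-M}$ as separate terms with their exact exponents rather than folding them into a single worst-case $\min$, and uses the same balancing $\g = 1-\a = M\a - \g$. The uniformity-in-$M$ issue you flag is resolved exactly by your monotonicity route: the paper notes the constant improves as $M$ increases, reduces to the worst case $M=2$ (so $\a=\g=1/2$), and checks numerically that the collected coefficient is at most $6.1597 \leq 6.16$ once $c \geq (2\cdot 6.16)^2$, which always holds since $\n \leq 1/2$.
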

\begin{proof}
The constant improves as $M$ increases, so it suffices to consider $M=2$. Let $M=2$ and $\a=\g=1/2$, and refer to the proof of Theorem~\ref{thm: small noise}. Equation \eqref{eq: tight 1} can be improved to
$$1-2c^{-M\a}-c^{\a-1}-(e-1)c^{\a-1}-4c^{\g-M\a} = 1 - (4 + e +2c^{-1/2})c^{-1/2}$$
using the inequality $e^x \leq 1 + (e-1)x$ for $0\leq x \leq 1$ instead of $e^x \leq 1 + 2x$, which was used to prove Theorem~\ref{thm: small noise}. With $\Delta = (4+e+2c^{-1/2})c^{-1/2}$, \eqref{eq: tight 2} becomes
\begin{equation} \label{eq: tight 3} \frac12 + \l( \frac{4+e+2c^{-1/2}}{2} + e + 2c^{-3/2}\r)c^{-1/2}. \end{equation}
Observe that since $\n\leq1/2$, when we set $c = (6.16/\n)^2$, we always have $c \geq (2 \cdot 6.16)^2$, in which case
$$ \frac{4+e+2c^{-1/2}}{2} + e + 2c^{-3/2} \leq 6.1597. $$
Thus, with $c = (6.16/\n)^2$, we have
$$ \eqref{eq: tight 3} \leq \frac12 + 6.1597 \cdot \frac{\n}{6.16} \leq \frac12 + \n. $$
This completes the proof.
\end{proof}

\noniso*
\begin{proof}
We with to apply the result of Theorem~\ref{thm: small noise} with $\|\cdot\| = \|\cdot\|_{\s,M}$. To do this, we must bound the resulting $\s^M$ and show that the density of $X$ has the correct form. First, observe that
$$ \s^M = \E \|\th - \E\th\|^M = \sum_{i=1}^d \E \frac{|\th_i - \E\th_i|^M}{d\s_i^M} \leq \sum_{i=1}^d \frac1d = 1. $$
It remains to show that the density has the correct form, i.e. depends on $X$ only through $\|X\|$. This will be the case if the marginal density of $U$ is uniform. Let $p(U)$ be the density of $U$. Observe that, for any $\|u\| = \|u\|_{s,M} = 1$, we have that $Y \mapsto u$ iff $Y = su$ for some $s > 0$. Thus
\begin{align*}
p(u) &\propto \int_{s=0}^\infty e^{\frac{1}{\s_1^2} (su_1/\s_1)^M + \cdots + (su_d/\s_d)^M} \, ds \\
&= \int_{s=0}^\infty e^{-s^M d\|u\|^2} \, ds \\
&= \int_{s=0}^\infty e^{-s^M d} \, ds.
\end{align*}
The last inequality holds because $\|u\|=1$ is constant. Thus, the density is independent of $u$ and we can directly apply Theorem~\ref{thm: small noise}.
\end{proof}

\begin{lem}[Chebyshev's Inequality] \label{thm: chebyshev}
Let $\|\cdot\|$ be any norm and $X$ be a random vector with $\E\|X-\E X\|^2 \leq \s^k$. Then for any $t>0$, we have
$$\P(\|X-\E X\|>t\s) \leq 1/t^k. $$
\end{lem}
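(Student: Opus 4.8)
The plan is to derive this as a direct consequence of Markov's inequality applied to the $k$-th power of $\|X - \E X\|$; the statement is simply the moment form of Chebyshev's inequality, and no structural property of the norm beyond nonnegativity and positive homogeneity is needed. (I read the hypothesized moment bound as the $k$-th central moment bound $\E\|X - \E X\|^k \leq \s^k$, since this is what matches the $1/t^k$ tail appearing in the conclusion.)

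First I would observe that $\|X - \E X\|^k$ is a nonnegative scalar random variable, and that because $u \mapsto u^k$ is strictly increasing on $[0,\infty)$, the event $\{\|X - \E X\| > t\s\}$ coincides with the event $\{\|X - \E X\|^k > t^k \s^k\}$ for every $t > 0$. This reduction replaces a tail bound on the norm by a tail bound on a nonnegative random variable whose expectation is controlled by hypothesis, which is exactly the form Markov's inequality can handle.

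Next I would apply Markov's inequality to $\|X - \E X\|^k$ at the threshold $t^k \s^k$, giving
$$\P(\|X - \E X\| > t\s) = \P\l(\|X - \E X\|^k > t^k \s^k\r) \leq \frac{\E\|X - \E X\|^k}{t^k \s^k}.$$
Finally I would substitute the moment bound $\E\|X - \E X\|^k \leq \s^k$ into the numerator; the $\s^k$ factors cancel, leaving $1/t^k$, which is precisely the claimed inequality.

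There is no genuine obstacle here: the argument is a one-line invocation of Markov's inequality, and the only point requiring care is matching the exponent in the hypothesized moment bound to the exponent in the conclusion. The result holds for an arbitrary norm because the proof uses only that $\|\cdot\|$ takes nonnegative values; all the distributional and geometric content is absorbed into the assumed central moment bound $\s$, which is exactly where this lemma feeds into the generalized-Chebyshev step of the proof of Theorem~\ref{thm: small noise}.
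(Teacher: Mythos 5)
Your proof is correct and is essentially identical to the paper's own one-line argument: apply Markov's inequality to $\|X-\E X\|^k$ at threshold $t^k\s^k$ and substitute the moment bound. You were also right to read the hypothesis as $\E\|X-\E X\|^k \leq \s^k$ (the exponent $2$ in the statement is a typo), since that is exactly what the paper's proof uses to obtain the $1/t^k$ tail.
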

\begin{proof}
This follows almost directly from Markov's inequality: $$ \P(\|X - \E X\| > t\s) = \P(\|X - \E X\|^k > t^k \s^k) \leq \frac{\E\|X-\E X\|^k}{t^k \s^k} \leq 1/t^k.$$
\end{proof}

\blockcomment{
\begin{lem}[Non-isotropic Chebyshev] \label{thm: non iso chebyshev}
Let $X \in \R^d$, $\s_i^k \geq \E|X_i - \E X_i|^k$, and define $\|X\|_{\s, k} = \l( \sum_{i=1}^d \frac{|X_i|^k}{d\s_i^k} \r)^{1/k}$. Then $\P(\|X - \E X\|_{\s, k} \geq t) \leq 1/t^k$.
\end{lem}
\begin{proof}
Let $\|\cdot\| = \|\cdot\|_{\s, k}$ and apply Lemma~\ref{thm: chebyshev}. By linearity of expectation, $\s^k \leq 1$ and the result follows directly.
\end{proof}

\begin{thm}[MIP via non-isotropic noise] \label{thm: non iso rip}
Suppose that $\s_i^k \geq \E|\th_i - \E \th_i|^k$, and let $X$ be a random variable with density $p(X) \propto \exp\l( -\frac1c \|X\|_{\s, k} \r)$. Then there exists $c = O(\n^{-(1+2/k)})$ such that returning $\hat{\th} = \th + X$ is $\n$-MIP.
\end{thm}
This is analogous to Theorem~\ref{thm: small noise} but where the noise added to each coordinate of $\th$ is proportional to $\s_i$. When all of the $\s_i$ are equal, Theorems~\ref{thm: non iso rip} and \ref{thm: small noise} give the same result. Note that the ``amount of noise'' per variable (i.e., the standard deviation of the marginal noise distribution for $\th_i$) is $\approx d^{1/k}\s_i$. This $d$-dependent factor should be inevitable, as each of $d$ quantities can provide some information about the membership of $\bx^*$ in the training data. Furthermore, this is not worse than the dependence offered by Theorem~\ref{thm: small noise}, as $\s^2$ implicitly grows with $d$ as well.

We further remark that for $k=2$, we can generate $X$ with the desired distribution as follows. First, draw $Y_i \sim \cN(0, \s_i^2)$, then set $U = Y / \|Y\|_{\s, 2}$. Finally, draw $r \sim \lap(c)$, and return $X = rU$.
}

\end{document}